\documentclass{article}

\usepackage{microtype}
\usepackage{graphicx}
\usepackage{subcaption}
\usepackage{booktabs} 

\usepackage{hyperref}

\usepackage[preprint]{icml2026}

\usepackage{amsmath}
\usepackage{amssymb}
\usepackage{mathtools}
\usepackage{amsthm}

\usepackage[capitalize,noabbrev]{cleveref}

\theoremstyle{plain}
\newtheorem{theorem}{Theorem}[section]
\newtheorem{proposition}[theorem]{Proposition}

\newtheorem{corollary}[theorem]{Corollary}
\theoremstyle{definition}
\newtheorem{definition}[theorem]{Definition}

\theoremstyle{remark}
\newtheorem{remark}[theorem]{Remark}

\usepackage[disable,textsize=tiny]{todonotes}

\icmltitlerunning{Universal Algorithm-Implicit Learning}

\usepackage{csquotes}

\usepackage{amsmath}
\usepackage{amssymb}

\usepackage{isomath}

\usepackage[mathscr]{euscript}

\usepackage{booktabs}
\usepackage{array}
\usepackage{multirow}

\usepackage{wrapfig}

\usepackage[round-mode=places,round-precision=1]{siunitx}

\newcolumntype{U}{S[table-format=2.2(2), separate-uncertainty=true]}
\newcolumntype{V}{S[table-format=2.2, separate-uncertainty=true]}
\usepackage{tikz}
\usetikzlibrary{positioning,arrows.meta,calc,fit,shapes,decorations.pathreplacing}

\usepackage{todonotes}
\usepackage{marginnote}

\graphicspath{{img/}}

\newcommand{\edit}[2][]{%
	{\color{orange}{#2}}%
	\if\relax\detokenize{#1}\relax
	\else
	\marginnote{\footnotesize\bfseries\color{orange}{#1}}%
	\fi
}

\newcommand{\tableend}{\vspace{-1em}}
\newcommand{\beforecaption}{}

\newcommand{\onlyincameraready}[1]{}

\newcommand{\R}{\mathbb{R}}

\DeclareMathOperator*{\E}{\mathbb{E}}

\DeclareMathOperator*{\argmax}{arg\,max}

\newcommand{\abs}[1]{\left\lvert#1\right\rvert}

\newcommand{\powset}[1]{\mathscr{P}\left(#1\right)}

\newcommand{\domain}[1]{\mathcal{#1}}
\newcommand{\ds}[1]{#1}
\newcommand{\FD}{\domain{X}}
\newcommand{\LD}{\domain{Y}}
\newcommand{\DD}{\domain{D}}
\newcommand{\TD}{\domain{T}}

\newcommand{\Ss}{\ds{S}}
\newcommand{\Qs}{\ds{Q}}
\newcommand{\Ts}{\mathcal{T}}
\newcommand{\MSs}{\Ts_\text{train}}
\newcommand{\MQs}{\Ts_\text{test}}
\newcommand{\TP}{\nu}
\newcommand{\loss}{\ell}
\newcommand{\alg}{\mathcal{A}}
\newcommand{\dci}{g}
\newcommand{\hyp}{f}

\newcommand{\algD}{\mathbb{A}}
\newcommand{\lcrv}{\alpha}
\newcommand{\trenc}{\Upsilon}
\newcommand{\head}{\Psi}

\DeclareMathOperator{\supp}{supp}

\newcommand*{\defeq}{\mathrel{\vcenter{\hbox{\scriptsize:}}}=}

\newcommand{\ours}{TAIL}
\newcommand{\oursp}{TAIL (ours)}
\newcommand{\oursl}{Transformer-based Algorithm-Implicit Learner}

\newcommand{\yep}{\checkmark}
\newcommand{\nope}{$\times$}

\begin{document}

\twocolumn[
  \icmltitle{Universal Algorithm-Implicit Learning}



  \icmlsetsymbol{equal}{*}

  \begin{icmlauthorlist}
    \icmlauthor{Stefano Woerner}{tue}
    \icmlauthor{Seong Joon Oh}{tue,k}
    \icmlauthor{Christian F. Baumgartner}{tue,luz}
  \end{icmlauthorlist}

  \icmlaffiliation{tue}{University of Tübingen, Germany}
  \icmlaffiliation{k}{KAIST, Korea}
  \icmlaffiliation{luz}{Faculty of Health Sciences and Medicine, University of Lucerne, Switzerland}

  \icmlcorrespondingauthor{Stefano Woerner}{stefano@woerner.eu}

  \icmlkeywords{Machine Learning, Meta-Learning, Few-shot Learning, ICML}

  \vskip 0.3in
]



\printAffiliationsAndNotice{}  

\begin{abstract}
    Current meta-learning methods are constrained to narrow task distributions with fixed feature and label spaces, limiting applicability.
    Moreover, the current meta-learning literature uses key terms like \enquote{universal} and \enquote{general-purpose} inconsistently and lacks precise definitions, hindering comparability.
    We introduce a theoretical framework for meta-learning which formally defines \emph{practical universality} and introduces a distinction between \emph{algorithm-explicit} and \emph{algorithm-implicit} learning, providing a principled vocabulary for reasoning about universal meta-learning methods.
    Guided by this framework, we present \ours, a transformer-based algorithm-implicit meta-learner that functions across tasks with varying domains, modalities, and label configurations.
    \ours\ features three innovations over prior transformer-based meta-learners: random projections for cross-modal feature encoding, random injection label embeddings that extrapolate to larger label spaces, and efficient inline query processing.
    \ours\ achieves state-of-the-art performance on standard few-shot benchmarks while generalizing to unseen domains.
    Unlike other meta-learning methods, it also generalizes to unseen modalities, solving text and audio classification tasks despite training exclusively on images,
    handles tasks with up to $20\times$ more classes than seen during training,
    and provides orders-of-magnitude computational savings over prior transformer-based approaches.
\end{abstract}


\section{Introduction}

Modern deep learning has achieved remarkable success by leveraging large datasets and heavy computation.
However, in many real-world settings, collecting large labeled datasets is costly, ethically constrained, or infeasible. Meta-learning, or \enquote{learning to learn}, addresses this challenge by training algorithms which can rapidly adapt to new tasks from only a few examples.
Despite this promise, existing meta-learning methods still struggle to learn strategies that transfer robustly across diverse task types, which require the meta-learner to develop generalizable learning strategies rather than overfitting to domain-specific features.
We argue that a true \enquote{learning-to-learn} algorithm should succeed even when the target domain is entirely different from the source and propose that meta-learning can benefit from a paradigm shift analogous to that of deep learning by exploiting large-scale \emph{meta}-datasets.

Recently, it has been shown that many meta-learning algorithms are limited when there is a large shift between the feature domain of the meta-training data and the application dataset \citep{chenCloserLookFewshot2020,luoCloserLookFewshot2023,guoBroaderStudyCrossDomain2020,ohUnderstandingCrossDomainFewShot2022,hospedalesMetaLearningNeuralNetworks2021, vettoruzzoAdvancesChallengesMetaLearning2024}.
We hypothesize that most existing methods suffer from structural limitations which prevent such broad generalization.
However, there is currently no theoretical framework for meta-learning that provides the right taxonomy to surface such structural limitations.
Moreover, the meta-learning literature lacks precise definitions for key terms like \enquote{universal} and \enquote{general-purpose}, which are being used inconsistently in different works.
This terminological ambiguity hinders comparability and obscures the true level of generalization which a method can achieve.

To address these issues, we first introduce a theoretical framework for meta-learning, which allows us to formally describe important properties of meta-learning paradigms.
Within this framework, we propose a distinction between \emph{algorithm-explicit} and \emph{algorithm-implicit} learning systems, which proves crucial for understanding why some meta-learning approaches generalize more broadly than others.
Moreover, we define the notion of \emph{practical universality}, describing the property of functioning as a robust learning algorithm across diverse task distributions that vary in feature domains, label spaces, and loss functions.

Building on this framework, we present a novel \emph{algorithm-implicit} meta-learning method called \ours\ that makes substantial advances towards practical universality (see Figure~\ref{fig:overview}).
While the theoretical framework is task-type-agnostic, we focus on classification as a first demonstration, where varying label spaces, domains, and modalities already represent a substantial advance over prior work.
Following prior work \citep{santoroMetaLearningMemoryAugmentedNeural2016,kaiserLearningRememberRare2017,kirschGeneralPurposeInContextLearning2024,fiftyContextAwareMetaLearning2023}, we reformulate the few-shot learning problem as a sequence modeling problem, using a transformer on sequences of data-label-pairs and an unlabeled query sample.
Prior approaches were limited to toy datasets, single domains or single modalities and did not generalize well across domains (see Tab.~\ref{tab:method-comparison}). Guided by our theoretical framework, we address three key challenges that limit the practical universality of existing meta-learning methods.
\begin{table}[t]
	\centering
	\beforecaption
    \caption{We compare \ours\ to other sequence-based meta-learners (SNAIL \citep{mishraSimpleNeuralAttentive2018}, GPICL \citep{kirschGeneralPurposeInContextLearning2024}, CAML \citep{fiftyContextAwareMetaLearning2023}) in dimensions relevant to practical universality.}
	\label{tab:method-comparison}
	{\scriptsize\setlength{\tabcolsep}{4pt}
		\begin{tabular}{@{}p{0.7cm}p{0.8cm}p{0.8cm}p{0.8cm}p{0.8cm}p{2.9cm}@{}}
			\toprule
			\textbf{Method} & 
			\textbf{Causal model?} &
			\textbf{Variable feature spaces} &
			\textbf{Variable label spaces} &
			\textbf{Flexible seq.\ length} &
			\textbf{Key Limitation} \\
			\midrule
			\textbf{SNAIL}
			& Causal 
			& \nope 
			& \nope 
			& \nope 
			& Cannot generalize across modalities, label spaces or support set size \\
			
			\textbf{GPICL}
			& Causal 
			& theoreti{-}cally 
			& \nope 
			& \nope 
			& Cannot generalize across label spaces or support set size; no cross-modality experiments \\
			
			\textbf{CAML}
			& Non-causal 
			& \nope 
			& \nope 
			& \yep 
			& Cannot generalize across modalities or label spaces \\
			
			\textbf{TAIL} (ours) 
			& \textbf{Non-causal} 
			& \textbf{\yep} 
			& \textbf{\yep} 
			& \textbf{\yep} 
			& --- \\
			\bottomrule
		\end{tabular}%
	}%
	\tableend%
\end{table}
\quad\textbf{(i) Universal feature handling:} We develop a feature encoding strategy that combines task-specific encoders with randomly sampled projections into a common latent space, enabling seamless transfer across completely different modalities (images, text, medical scans) without architectural modifications and without retraining.
\quad\textbf{(ii) Universal label handling:} We introduce a randomized global dictionary of learnable embeddings, allowing the model to handle arbitrary label sets and extrapolate to tasks with more classes than seen during training.
\quad\textbf{(iii) Computational efficiency:} Our transformer-based method scales to tasks with much larger label sets while maintaining strong performance and requiring only a fraction of the computation of previous transformer-based methods.

These innovations enable our algorithm to scale to training on large and diverse meta-datasets.
Analogous to the transition in deep learning, where large datasets supplanted explicit feature engineering and strong hand-crafted inductive biases, the algorithm-implicit meta-learning paradigm has the potential to drive a similar shift, replacing explicitly defined learning procedures with models that implicitly learn how to learn from large-scale meta-datasets.

We demonstrate that our method achieves new state-of-the-art performance across a range of few-shot classification benchmarks and generalizes robustly to unseen domains and modalities.
Notably, it maintains strong performance on tasks involving up to $20\times$ more classes than observed during training, a capability not exhibited by existing meta-learning approaches.
Moreover, our method performs well on highly diverse domains not seen during training.
Most strikingly, it successfully solves text-based and audio-based few-shot learning tasks despite being trained exclusively on image data.

\textbf{Code:} We make our code publicly available on GitHub.\footnote{\href{https://github.com/StefanoWoerner/TAIL}{https://github.com/StefanoWoerner/TAIL}}


\textbf{Conflict of Interest Disclosure:}
The authors declare no financial conflicts of interest.


\section{Background and Notation}\label{sec:notation}

\subsection{The Learning Problem}
We formalize a learning task
as $ T \defeq (\FD, \LD, p, \ell) $,
where $\FD$ is the feature domain or input domain, $\LD$ is the label domain or output domain, $p(x, y)$ is
a distribution of data with $x\in\FD, y\in\LD$,
and $\loss: \LD\times\LD \to \R$ is a loss function, which measures a \enquote{distance} between a predicted value and the ground truth label.
$\loss$ is assumed to be measurable, non-negative, but not necessarily a true metric.

The learning problem on task $T$ consists of finding a function $\hyp:\FD\to\LD$, which is typically called \enquote{hypothesis} or \enquote{model}, that minimizes the risk
\[
R(\hyp) = \E_{(x,y) \sim p(x,y)}\left[ \loss(\hyp(x), y)\right] \text{\,.}
\]
%
%
In practice, the true data distribution is unknown, and $f$ is estimated using a sample from $p$ called the training set or support set $\Ss$, using supervised learning.


Formally, given the space $\DD\defeq\supp(p) \subseteq \FD\times \LD$ of possible data, defined by the support of the distribution $p$, we can define a support set $\Ss \subset \DD$ as
$ \Ss \defeq \{(x_i, y_i)\}_{i=1}^{\abs{\Ss}}$ with $ (x_i, y_i)\in \supp(p)$.
With this notation we can define:

\begin{definition}[Learning Algorithm]
A learning algorithm \( \alg: \powset{\FD \times \LD} \to \LD^\FD,\; \Ss \mapsto \hyp \) is a function that maps a dataset $\Ss \subseteq \DD$ to a hypothesis $\hyp$.
\end{definition}

\subsection{Meta-Learning}
Meta-learning 
can be understood as finding a learning algorithm $\alg$ through meta-optimization over the space $\algD$ of possible learning algorithms.
In meta-learning we assume access to a meta-dataset $\Ts$ of tasks sampled from a distribution of tasks $\TP$.
The meta-learning problem can now be described as finding a learning algorithm $\alg$ that minimizes
\begin{gather*}
	R^\text{Meta}(\alg) = \E_{T\sim\TP} \E_{S \sim \bigcup_{n\geq 1}p_T^n} R_T(\alg(\Ss)) \\
	= \E_{T\sim\TP} \E_{S \sim \bigcup_{n\geq 1}p_T^n} \E_{(x,y)\sim p_T} \left[\loss_T(\alg(\Ss)(x), y)\right] \text{\,.}
\end{gather*}


\section{Universal Algorithm-Implicit Learning}\label{sec:framework}

Prior meta-learning literature lacks a unified theoretical framework for reasoning about generalization across domains, modalities, and label spaces.
Different works use inconsistent terminology: terms like \enquote{universal}, \enquote{general-purpose}, and \enquote{cross-domain} are employed with varying meanings, often without precise definitions.
For instance, some works claim \enquote{universality} while restricting to fixed feature dimensions~\citep{fiftyContextAwareMetaLearning2023}, others use \enquote{general-purpose} but evaluate only on single modalities~\citep{kirschGeneralPurposeInContextLearning2024}.
The lack of clear definitions makes different works difficult to compare and might lead to a mismatch between expectation and reality when a certain term is used.

Here, we provide a formal definition of universal learning and introduce the term of \textbf{practical universality}.
Our definition adopts a broader notion of universality than prior work.
It aims at describing algorithms which work on varying feature spaces, label spaces, and task types across different modalities and domains.
Additionally, we propose a distinction \textbf{algorithm-explicit vs.\ algorithm-implicit} meta-learning paradigms, which clarifies why certain approaches generalize better than others.
We believe that this distinction is pivotal for understanding meta-learning paradigms.

\subsection{Demonstration-Conditioned Inference}

Rather than first learning a hypothesis and using it for prediction, one can directly make predictions for a data point conditioned on a support set, without explicitly computing a hypothesis $\hyp$.
We refer to such a mapping as a \emph{demonstration-conditioned inference (DCI) function}.

\begin{definition}[Demonstration-Conditioned Inference]
	A DCI function \( \dci: \powset{\FD \times \LD} \times \FD \to \LD,\; (\Ss, x) \mapsto \hat{y} \) is a function that directly maps a support set $\Ss\subseteq \DD$ and query point $x$ to a prediction $\hat{y}$.
\end{definition}

Note that for any deterministic DCI function $\dci$ and fixed $\Ss$, there exists an induced hypothesis $\hyp_\Ss$ where $\hyp_\Ss(x) = \dci(\Ss, x)$.
That is, $\dci$ implicitly defines a learning algorithm $\alg_\dci$ where $\alg_\dci(\Ss) = f_\Ss$.
Vice-versa, an algorithm $\alg$ induces a DCI $\dci_\alg$ with $\dci_\alg(\Ss, x) = \alg(\Ss)(x)$.

\subsection{Algorithm-Explicit vs.\ Algorithm-Implicit Learning}

We introduce a fundamental distinction between learning paradigms based on whether the learning algorithm is explicitly specified or implicitly emerges.

\textbf{Algorithm-Explicit Learning: } Intuitively, a learning system is \textit{algorithm-explicit} if its training procedure is explicitly specified. We formally define such a system, as one which is characterized by an explicit procedure $\alg$ that maps a dataset $\Ss$ to a hypothesis $f$ (or equivalently $g$), i.e. $\alg(\Ss) = f$.
An example for algorithm-explicit learning are MLPs optimized by (stochastic) gradient descent.
The learning algorithm $\alg$ on a training set $\Ss$ is defined as iteratively updating the MLP's parameters $\theta$ with GD steps on samples from $\Ss$, yielding the hypothesis $\hyp_\theta$.
Other examples of explicitly defined learning systems are $k$-nearest neighbors \citep{coverNearestNeighborPattern1967} ($\alg$ stores $\Ss$ and $\hyp$ performs distance-based voting), 
or MAML \citep{finnModelAgnosticMetaLearningFast2017} ($\alg$ performs $k$ steps of gradient descent from initialization $\theta_0$ to find $\hyp$).

\textbf{Algorithm-Implicit Learning: } A learning system is \emph{algorithm-implicit} if it operates through a parameterized DCI function $\dci_\theta$ where the learning algorithm $\alg_\dci$ emerges from the learned parameters $\theta$ but is never explicitly specified.
The implicit $\alg_\dci$ is defined only through the behavior of $\dci_\theta(\Ss, \cdot)$ for various $\Ss$.
This means that implicit learning algorithms $\alg$ are black boxes with minimal inductive biases.
%
Examples of algorithm-implicit learning are attention-based meta-learners, such as SNAIL \citep{mishraSimpleNeuralAttentive2018}, CAML \citep{fiftyContextAwareMetaLearning2023} and GPICL \citep{kirschGeneralPurposeInContextLearning2024},
or in-context learning methods \citep{brownLanguageModelsAre2020,wuWhyInContextLearning2025}.

Algorithm-explicit methods have externally specified rules and may only learn a narrow set of parameters.
These methods with explicitly defined learning procedures embed assumptions about the structure of optimal solutions.
The resulting strong inductive biases help when meta-training data are scarce 
but become liabilities when target tasks differ substantially from training tasks.
In contrast, algorithm-implicit approaches place no assumptions on the learning algorithm, allowing the model to task-distribution-specific algorithms that adapt to the structure present in diverse meta-training data, but requiring more meta-training tasks to do so. 
Any inductive biases come only from the computational structure of $\dci$, not from explicit design constraints of the learning algorithm.
Our method, which we describe in Section~\ref{sec:method}, follows an algorithm-implicit approach.

\subsection{Practical Universality and Universal Learning Algorithms}\label{sec:practical-universality}

Prior work does not provide a formal definition that captures what a truly universal few-shot learner should achieve.
In the following, we derive \emph{practical universality} as a formal criterion that addresses this gap.

Traditional learning theory provides notions of universality that are asymptotic in nature, but these are insufficient to make statements about few-shot learning.
The classical notion of \emph{universal consistency} guarantees convergence to Bayes-optimal performance as sample size grows to infinity, but makes no claims about finite-sample behavior.

\begin{definition}[Universal Consistency]
A learning algorithm $\alg$ is \emph{consistent} w.r.t.\ a distribution $p$ over $\FD \times \LD$ if the risk of the model $\alg (\Ss_n)$ converges to the Bayes risk $R^*=\inf_f R(f)$, as the size of the support set $n = |\Ss_n| \to \infty$ where $\Ss_n \sim p^n$, i.e.\ if $\lim\limits_{n \to \infty} R(\alg (\Ss_n)) = R^*$.\\
A learning algorithm $\alg$ is \emph{universally consistent} if it is consistent for any distribution $p$.
\end{definition}

To be able to analyze the finite sample performance of an algorithm, we formalize the idea of a \emph{learning curve}, which describes how the expected performance of an algorithm evolves as a function of the number of training samples.
The term is used in various works in statistical learning theory with equivalent or similar definitions \citep{schuurmansCharacterizingRationalExponential1997,vieringShapeLearningCurves2023}.
\begin{definition}[Learning Curve]
The learning curve
\[ \lcrv_T(\alg, n) = \E_{S \sim p_T^n}[R_T(\alg (\Ss)) - R_T^*] \text{\,.} \]
of an algorithm $\alg$ on a task $T$ computes the expected residual risk conditioned on the size of the support set.
\end{definition}
Intuitively, we would like a learning algorithm to have a lower residual risk with an increasing amount of training data.
That is, a valid learning algorithm should, on average, not perform worse when given more data.
With this, we can define the notions of a \emph{valid learning algorithm} and of \emph{universal validity}, analogous to the asymptotic notions of consistency and universal consistency.
\begin{definition}[Valid Learning Algorithm]
An algorithm $\alg$ qualifies as a \emph{valid learning algorithm} for task $T$ if $\lcrv_T(\alg, n)$ is monotonically non-increasing in $n$ and for any $\varepsilon > 0$ there exists an $n$ with $\lcrv_T(\alg, n)<\varepsilon$.
\end{definition}
\begin{definition}[Universal Validity]
$\alg$ is universally valid w.r.t.\ a distribution of tasks $\TP$ if it is a valid learning algorithm for all tasks $T$ in the class of tasks $\supp(\TP)$.
\end{definition}
\textbf{Practical Universality:}
Note that we defined universal validity as a property of a learning algorithm \emph{with respect to} a distribution of tasks.
Typically, in meta-learning research, all tasks in the meta-dataset $T \in \TD$ are considered to have the same feature space, label space and loss function. 
Most papers implicitly assume that $\LD_T \cong \{1, \cdots, K\}$ for a fixed number of classes $K$.
Instead, here we allow tasks to have feature spaces, label spaces and data distributions that differ from each other. 
Importantly, a test time task $T'\in \MQs$ might have a feature space or label space that is not represented in $\MSs$.
We classify an algorithm $\alg$ (or a DCI $\dci_\theta$ inducing an algorithm $\alg_\dci$) as a \textbf{practically universal} learning algorithm only if it is a universally valid learning algorithm on a distribution containing such test time tasks.

\subsection{Few-Shot Benchmarking}\label{sec:few-shot-instance}
Intuitively, a good few-shot algorithm needs to learn a new task using only a limited number of labeled examples.
In practice, tasks are presented as a pair $(\Ss, \Qs)$ of support and query sets and not with their underlying data distribution.
For each \emph{episode} of $N$-shot learning on a task $T$, we sample a support set $\Ss\subset \DD_T = \supp(p_T)$ such that $\Ss$ contains $N$ i.i.d.\ samples from $p_T(x\mid y)$ for each label $y\in\LD$ in the task.
We then sample a query set $\Qs\subset \DD \setminus\Ss$ such that it contains a fixed number $N_\Qs$ of i.i.d.\ query samples from $p_T(x\mid y)$ for each label $y\in\LD$ in the task.
We write $\Ss\sim p_T^n$ and $\Qs\sim p_T^{n'}$ with $n\defeq N\cdot\abs{\LD} = \abs{\Ss}$ and $n' \defeq N_\Qs\cdot\abs{\LD} = \abs{\Qs}$.
We call the resulting pair $(\Ss, \Qs)$ an $N$-shot instance of task $T$.
Evaluating algorithms on many $N$-shot episodes sampled from our distribution of tasks can is used to estimate $\E_{T \sim \TP} \lcrv_T(\alg, n)$ with $n=N\cdot\abs{LD_T}$.


\section{A Transformer-based Universal Algorithm-Implicit Learner}\label{sec:method}

Drawing on prior work and our own empirical findings, we argue that a promising direction to achieve practical universality lies in \emph{algorithm-implicit} meta-learners that can exploit large-scale meta-datasets and rely on minimal inductive bias.
Guided by our theoretical framework, we therefore design our meta-learner as a demonstration-conditioned inference function given by a parametrized black-box function $\dci_\theta$, which is implemented using a non-causal transformer that processes support and query examples jointly to directly produce predictions for the query.
This approach does not require test-time training and makes predictions using only a single forward pass, allowing for efficient deployment under computational constraints.
We call our method \oursl\ (\ours).

In contrast to previous transformer-based algorithm-implicit approaches,
we introduce three technical contributions:
\quad\textbf{(i)} universal feature encoding via random projections enabling arbitrary $\FD_T$; 
\quad\textbf{(ii)} random injection label embeddings, enabling handling arbitrary $\LD_T$ including extrapolation to more classes than seen in training; 
\quad\textbf{(iii)} improved computational efficiency at scale through in-line processing of multiple query samples.
With these innovations, our method can be meta-trained on a much broader distribution of tasks (across domains, modalities, and label configurations), which in turn allows learning a more general and powerful implicit algorithm.

Each element of the input sequence to the transformer represents a sample from the support set, including its label, or an unlabeled query sample.
For a support set $\Ss = \{(x_i, y_i)\}_{i=1}^{n}$ with $n \defeq \abs{S} = N\cdot K$ and a query sample $x'$ from $\Qs$, the input sequence is given by $Z = (z_1, \dots, z_{n}, z')$.
In practice, we process all query samples together in one sequence for better training efficiency.
This speeds up training and testing by orders of magnitude and we experimentally show the computational advantages.
For simplicity we will continue to use the former notation with a single query sample.
The transformer encoder $\trenc$ acts on $Z$ and produces an output sequence $\trenc(Z)$.

To handle tasks that differ in data distributions, feature spaces, and label spaces, we employ components hereafter described that (i) map inputs into a shared format for constructing the sequence $Z$, and (ii) project transformer outputs back into the original label space (see Figure~\ref{fig:overview}).
\begin{figure*}[tbh]
\centering
\resizebox{\textwidth}{!}{
\input{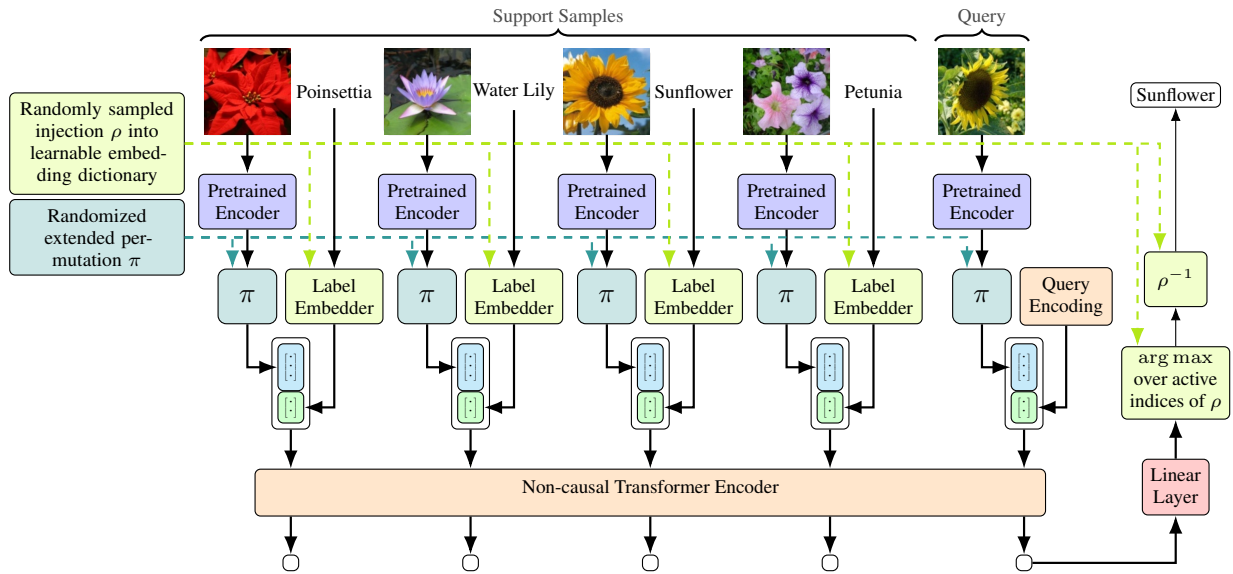}
}%
\beforecaption
\caption{
\textbf{Method overview}.
The input is encoded with a modality-appropriate pretrained encoder and then projected to a common modality-agnostic space.
The labels are embedded using a randomized injection to a learnable embedding dictionary.
The input and label embeddings are concatenated and form the input tokens for a transformer encoder.
A linear classification head makes a prediction in label embedding space, which is then remapped to the original set of labels.
}
\label{fig:overview}
\end{figure*}

\subsection{Universal Feature Encoding}
\label{sec:universal-feature-encoding}
To handle varying feature domains $\FD_T \neq \FD_{T'}$ across tasks we encode the features from $\FD_T$ into $\R^{d_T}$ with an encoder $\phi_T$ appropriate for the respective feature modality of each meta-training task.
The encoder may take different forms, for example, it could be a simple concatenation of the input features or a pretrained feature extractor.
We use pre-trained feature extractors for which we provide details in Appendix~\ref{sec:architecture-details}.
The encoded vector is then projected into $\R^{d_\text{data}}$ with a randomly sampled projection $\pi : \R^{d_T} \to \R^{d_\text{data}} $.
%
We sample $\pi$ uniformly from extended permutations (see Appendix~\ref{sec:extendedpermutation}).
This choice gives us two desirable properties.
First, it avoids overfitting to the feature structure of a specific encoder $\phi_T$.
Second, the random permutation of the feature space essentially acts as data augmentation, enabling our model to see more diverse inputs.

This is similar to how the human brain processes multi-modal information.
The brain employs modality-specific processing in early sensory cortices that handle low-level features \citep{calvertCrossmodalProcessingHuman2001}.
These specialized regions then feed into convergence zones, where information from different modalities is processed into increasingly abstract representations \citep{damasioTimelockedMultiregionalRetroactivation1989}.
This mirrors our approach of using modality-specific encoders followed by a transformer which is shared across domains and reasons over the latent representations.

\subsection{Random Injection Label Embedding and Classification Head}
\label{sec:random-injection-label-embedding}
We consider a general meta-learning setting in which label spaces may differ across tasks, i.e.\ $\LD_T \ncong \LD_{T'}$, and where the number of labels at test time may exceed those observed during training.
We refer to this setting as \emph{label space extrapolation}.
Because attention-based architectures do not require architectural changes for longer sequences, transformers naturally scale to tasks with varying numbers of labels or examples per label.
We only need to design a label embedder and a classification head to handle arbitrary $\LD_{T}$.
To this end, we define a global learnable dictionary $\mathcal{E}$ of label embeddings $\{ e_1, \dots, e_M \} \subset \R^{d_\text{label}}$ with $M \gg 1$ and define $\mathcal{E}(i) = e_i$ for any index $i\in[M] = \{1,\dots,M\}$.
For any task for which $K \defeq \abs{\LD_T} \leq M$, we can simply use $K$ embeddings from the dictionary, thereby unifying the label space for all tasks.
Crucially, if we make $M$ large enough, we can use this dictionary of embeddings for test tasks that have much larger labels sets than any of the training tasks.

To ensure that each of the embeddings is trained, we select the $K$ embeddings to use at random.
For each episode with task $T$ we sample an injective mapping $\rho : \LD_T \to [M]$
uniformly from the set of all injections $\mathrm{Inj}(\LD_T,[M])$.
The label embedder for this episode is now given $\mathcal{E}\circ\rho$, which maps elements of $\LD_T$ to vectors in the continuous space $\R^{d_\text{label}}$.
We prove that this strategy meaningfully trains all embeddings, even when $K\ll M$ for all tasks $T\in\Ts$, and leads to label space extrapolation ability in Appendix~\ref{sec:proofs}.

Lastly, we need a classifier head $\head$ acting on the transformer output $\trenc(Z)$.
We use a linear layer $s$ to produce class scores for each index of the label embedding dictionary and compute $\hat{\jmath}=\argmax_{j\in \rho(\LD)} s_j$ where $\rho(\LD)\subset[M]$ is the image of the label space $\LD$ under $\rho$, restricting the possible indices to those of active embeddings in this episode.
The classifier head is then given by 
$ \head(\Lambda) = \rho^{-1}(\hat{\jmath}) \text{ with } \hat{\jmath}=\argmax_{j\in \rho(\LD)} (s_j(\Lambda)) $
reversing the index selection in the embedder.

Input tokens are constructed by concatenating the feature encoding and label embedding.
For the query token $z'$ we use a learnable query class marker $c\in \R^{d_\text{label}}$ in place of the label embedding.



\subsection{Training procedure}\label{sec:training}
We train \ours\ on a large-scale meta-dataset, consisting of ImageNet \citep{russakovskyImageNetLargeScale2015}, Meta-Album \citep{ullahMetaAlbumMultidomainMetaDataset2022} and MedIMeta \citep{woernerComprehensiveEasytouseMultidomain2025}.
We sample training episodes by first sampling a task from the meta-training set $\Ts_\text{train}$ and a \enquote{number of shots} $N$ for the episode and then sampling support and query sets as described in Section~\ref{sec:few-shot-instance}.
The episode loss is given by the empirical risk $R_\Qs(\hyp_{\dci_\theta}) = \sum_{(x,y) \in\Qs}\loss_T(\dci_\theta(\Ss, x), y))$.
Details about the training procedure and chosen hyperparameters are given in Appendix~\ref{sec:training-details}.

\subsection{Theoretical properties}\label{sec:theoretical-properties}
As established in Section~\ref{sec:practical-universality}, \ours\ must be a valid learning algorithm for varying feature domains and label domains in order to satisfy the requirements of practical universality.
We propose that the validity of the implicit algorithm learned by \ours\ is invariant to both the feature domain and the label domain.
This is ensured by the randomly sampled extended permutation and the random injection label embedding.
We theoretically show the invariance in the label domain by providing proofs for the coverage and unbiasedness of the embedding dictionary
and invariance to the feature domain by proving the coordinate coverage of the randomly sampled $\pi$ as well as the resulting equivariance to feature coordinate permutations in Appendix~\ref{sec:proofs}.
In order to avoid learning unwanted correlations our algorithm should be invariant to the order of demonstrations and on the coding of the labels. In Appendix~\ref{sec:proofs} we prove that this is indeed the case.


\section{Related Work}\label{sec:related-work}
Existing meta-learning approaches can be categorized as model-based, optimization-based, or metric-based.
Model-based methods such as MANN \citep{santoroMetaLearningMemoryAugmentedNeural2016} learn a built-in learning algorithm which adapts to new tasks by changing its internal state.
Optimization-based methods, such as Model-Agnostic Meta-Learning (MAML) \citep{finnModelAgnosticMetaLearningFast2017}, learn an initialization that can be adapted to new tasks with a few gradient updates.
Metric-based methods, such as Prototypical Networks \citep{snellPrototypicalNetworksFewshot2017}, learn a metric space in which samples from the same class are close.
All these approaches either cannot process varying feature and label spaces, or their performance drops substantially when the test domain is significantly different from the meta-training set \citep{chenCloserLookFewshot2020,luoCloserLookFewshot2023}.

It has recently been shown that the naive approach of fine-tuning or linear probing of large pretrained models on a few-shot support set often outperforms meta-learning approaches
\citep{guoBroaderStudyCrossDomain2020,ohUnderstandingCrossDomainFewShot2022}.
Moreover, those approaches can be trivially applied to different label spaces by replacing the classification head.
However, fine-tuning is computationally expensive when new tasks have to be learned frequently.
Furthermore, the weights for each task need to be stored to reuse the model at a later time. A simple alternative to fine-tuning is applying a ProtoNet head to a fixed pretrained backbone which allows meta-testing without any training at test time (see e.g. \cite{fiftyContextAwareMetaLearning2023}). However, this approach heavily relies on the quality of the backbone. 
Prompt-based, adapter-based, or external-knowledge methods combining multiple foundation models can yield strong performance across domains, but often at the cost of requiring careful prompt design, fine-tuning, or reliance on pre-training domain coverage \citep{liuFewshotAdaptationMultimodal2024}.
These methods are also typically restricted to a single modality.

Some early model-based meta-learning approaches \citep{santoroMetaLearningMemoryAugmentedNeural2016,kaiserLearningRememberRare2017} reformulate few-shot classification as a sequence modeling problem using LSTMs \citep{hochreiterLongShortTermMemory1997}.
More recent approaches pursue a similar strategy using self-attention sequence models \citep{vaswaniAttentionAllYou2017}.
\citet{mishraSimpleNeuralAttentive2018} apply temporal convolutions alternating with causal attention to the concatenated support and query data.
Their model (SNAIL) treats the support set as a sequence of concatenated (input, label) pairs and directly produces query predictions.
Due to its architecture requiring a fixed size for the features and labels, SNAIL cannot generalize across modalities or label sets.
Newer works use transformers as few-shot learners, drawing parallels to in-context learning in NLP \citep{brownLanguageModelsAre2020}.
GPICL \citep{kirschGeneralPurposeInContextLearning2024} pursues a similar strategy using a causal transformer model, however, due to the use of positional encodings, the method can only process fixed-size sequences and can therefore not perform label space extrapolation. Because it uses random projections of the input for data augmentation the architecture is, in theory, suitable for processing different modalities.

The causal nature of the above approaches breaks the equivariance to label re-indexing and the demonstration order invariance (see Section~\ref{sec:theoretical-properties}).
CAML by \citet{fiftyContextAwareMetaLearning2023} instead uses a non-causal transformer and a Equal Length and Maximally Equiangular Set (ELMES) of vectors for embedding labels.
Although ELMES vectors are fixed, the transformer needs to be trained with $K$-way tasks to handle $K$-way tasks at test time and therefore CAML can also not perform label space extrapolation.
Moreover, it cannot work in the cross-modality setting due to its fixed token size.


Recent work compares in-context learning with meta-learning and studies how transformers learn in context.
\citet{oswaldTransformersLearnInContext2023} show that transformers can approximate gradient descent in their forward pass. 
\citet{baiTransformersStatisticiansProvable2023} prove that transformers can implement a range of classical algorithms and perform in-context algorithm selection.
\citet{wuWhyInContextLearning2025} view ICL models as meta-learners, arguing that transformers learn data-dependent learning algorithms.
Our work complements this line of research by providing a formal definition of practical universality, introducing the algorithm-explicit/implicit distinction, and by designing a practical algorithm-implicit meta-learner that satisfies these formal properties while achieving state-of-the-art performance.
Multiple surveys \citep{hospedalesMetaLearningNeuralNetworks2021, vettoruzzoAdvancesChallengesMetaLearning2024} present theoretical frameworks for meta-learning. 


\section{Experiments}
\label{sec:experiments}

\begin{table}[t!]
	\centering
	\beforecaption
	\caption{Mean classification accuracy in \% over 1000 test episodes.}
	\label{tab:5way5shot}
	{\tiny\setlength{\tabcolsep}{4pt}
	\begin{tabular}{llUUUU}
		\toprule
		Shot & Method & {CIFAR-FS} & {\emph{mini}ImageNet} & {\emph{tiered}ImageNet} & {Pascal VOC} \\
		\midrule
		\multirow{6}{*}{\textbf{5-shot}}
		& \textbf{Linear Probe} & 91.86 (0.32) & 97.65 (0.15) & 95.30 (0.30) & 83.57 (0.53) \\
		& \textbf{ProtoHead}   & 91.09 (0.34) & 97.58 (0.15) & 95.54 (0.27) & 84.26 (0.51) \\
		& \textbf{SNAIL}       & 91.03 (0.38) & 98.93 (0.11) & 97.43 (0.23) & 85.47 (0.53) \\
		& \textbf{GPICL}       & 91.20 (0.35) & 99.44 (0.07) & 98.18 (0.19) & 87.46 (0.51) \\
		& \textbf{CAML}        & 91.69 (0.34) & 99.29 (0.08) & 97.98 (0.21) & 87.87 (0.51) \\
		& \textbf{\oursp}      & \bfseries 94.55 (0.29) & \bfseries 99.63 (0.06) & \bfseries 98.67 (0.16) & \bfseries 89.78 (0.48) \\
		\midrule
		\multirow{6}{*}{\textbf{1-shot}}
		& \textbf{Linear Probe} & 75.95 (0.64) & 88.22 (0.49) & 85.04 (0.61) & 64.22 (0.68) \\
		& \textbf{ProtoHead}   & 73.24 (0.68) & 87.84 (0.51) & 83.76 (0.63) & 62.84 (0.77) \\
		& \textbf{SNAIL}       & 76.37 (0.65) & 95.79 (0.30) & 92.35 (0.48) & 72.36 (0.75) \\
		& \textbf{GPICL}       & 77.54 (0.66) & 97.64 (0.24) & 94.80 (0.41) & 75.61 (0.70) \\
		& \textbf{CAML}        & 77.93 (0.65) & 97.08 (0.24) & 93.66 (0.42) & 76.76 (0.72) \\
		& \textbf{\oursp}      & \bfseries 84.35 (0.58) & \bfseries 98.79 (0.14) & \bfseries 96.76 (0.30) & \bfseries 80.48 (0.73) \\
		\bottomrule
	\end{tabular}
	}
	\tableend
\end{table}

We evaluate \ours\ across four different settings: performance on tasks with a similar domain (i.e. in-domain), cross-domain performance, generalization to unseen modalities, and label space extrapolation.
For each test task and each $N$ we sample $1000$ episodes with different support and query sets as described in Section~\ref{sec:few-shot-instance}.
We report the mean accuracy over these $1000$ episodes and the $95\%$-CI for the mean.
Moreover, we assess the computational efficiency compared to the baselines.
Our experiments show that \ours\ achieves state-of-the-art results while having the flexibility to handle completely new domains and task configurations without retraining.
We additionally report ablation studies in Appendix~\ref{sec:ablation-studies} to analyze the effect of the universal feature encoding using random projections, and the embedding schedule of the embedding dictionary.

\subsection{Baselines and Training}
We consider two algorithm-explicit, and three algorithm-implicit baselines. 
First, we consider the algorithm-explicit \textbf{Linear Probing}, which trains a linear classifier on representations from pretrained foundation models at meta-test time.
This can be considered a universal learning algorithm and has been shown to perform well on specialized datasets \citep{woernerNavigatingDataScarcity2024}, but requires retraining at test time for every task.
Next, we consider the algorithm-explicit 
ProtoNet \citep{snellPrototypicalNetworksFewshot2017} with a fixed pre-trained backbone (which we coin \textbf{ProtoHead}), which is not meta-trained, but allows meta-testing without retraining at test time. 
Lastly, we consider the attention-based algorithm-implicit approaches \textbf{SNAIL} \citep{mishraSimpleNeuralAttentive2018}, \textbf{CAML} \citep{fiftyContextAwareMetaLearning2023} and \textbf{GPICL} \citep{kirschGeneralPurposeInContextLearning2024}.
We make slight modifications to GPICL, to able to process different label spaces (see Appendix~\ref{sec:gpicl-mod}).
For all baseline methods, as well as \oursp, we use fixed pretrained backbones as feature encoders.
The encoders are frozen for all methods and are not fine-tuned.
To ensure a fair comparison, all meta-learning algorithms were trained on \emph{the same} meta-training data and used \emph{the same} pretrained encoders.
We show in Appendix~\ref{sec:resnet-ablation} that \ours's performance advantage does not depend on a specific encoder.

\subsection{Results}
\label{sec:in-domain-exp}
\textbf{In-Domain Image Classification:}
In order to evaluate the in-domain performance on standard benchmark problems, we tested all approaches on MiniImageNet~\citep{vinyalsMatchingNetworksOne2016}, tieredImageNet~\citep{renMetaLearningSemiSupervisedFewShot2018}, CIFAR-FS~\citep{bertinettoMetalearningDifferentiableClosedform2018}, and Pascal VOC~\citep{everinghamPascalVisualObject2010b},
which are generic object-recognition datasets and therefore can be considered in-domain with respect to our training set.
As shown in Table~\ref{tab:5way5shot}, \ours\ consistently outperformed all baselines in the 1-shot and 5-shot setting.

\textbf{Cross-Domain Specialized Datasets:}
For the cross-domain evaluation, we tested all learners on a diverse set of specialized domains not present in our meta-training set:
Caltech Birds Dataset (CUB)~\citep{wahCaltechucsdBirds2002011Dataset2011},
FGVC-Aircraft~\citep{majiFineGrainedVisualClassification2013},
meta-iNat and tiered meta-iNat~\citep{wertheimerFewShotLearningLocalization2019},
the cxr, oct and pbc subsets from MedIMeta~\citep{woernerComprehensiveEasytouseMultidomain2025},
the Paintings dataset~\citep{crowleySearchArt2015} and Inter-Domain Image Classification Pascal+Paintings~\citep{fiftyContextAwareMetaLearning2023}.

\begin{table*}[t]
	\centering
	\beforecaption
	\caption{Mean 5-way classification accuracy in \% over 1000 test episodes for 1-shot and 5-shot settings.}
	\label{tab:fsl-ood}
	{\tiny\setlength{\tabcolsep}{4pt}
		\begin{tabular}{llUUUUUUUUU}
			\toprule
			& & {Aircraft*} & {CUB*} & {meta-iNat*} & {tiered meta-iNat*} & {cxr} & {oct} & {pbc} & {Paintings*} & {Pascal-Paintings*} \\
			\midrule

			\multirow{6}{*}{\textbf{5-shot}}
			& \textbf{Linear Probe} & 92.12 (0.42) & 95.17 (0.30) & 92.91 (0.36) & 88.32 (0.47) & \bfseries 25.10 (0.40) & 49.61 (0.55) & 68.39 (0.54) & \bfseries 66.21 (0.44) & 71.62 (0.43) \\
			& \textbf{ProtoHead}    & 92.07 (0.42) & 95.26 (0.30) & 92.51 (0.39) & 88.47 (0.44) & 25.04 (0.41) & 49.37 (0.53) & 69.81 (0.51) & 66.06 (0.46) & 71.57 (0.44) \\
			& \textbf{SNAIL}        & 90.19 (0.48) & 95.57 (0.34) & 95.08 (0.33) & 91.21 (0.47) & 21.52 (0.30) & 35.36 (0.46) & 71.08 (0.54) & 57.49 (0.46) & 69.12 (0.46) \\
			& GPICL                 & 90.83 (0.45) & 96.02 (0.30) & 95.56 (0.31) & 91.15 (0.46) & 22.70 (0.34) & 41.64 (0.50) & 53.28 (0.52) & 54.29 (0.48) & 65.79 (0.51) \\
			& \textbf{CAML}         & 93.09 (0.39) & 97.55 (0.22) & 96.20 (0.28) & 93.53 (0.36) & 23.30 (0.40) & 46.48 (0.53) & 80.19 (0.45) & 58.72 (0.49) & 70.13 (0.46) \\
			& \textbf{\oursp}       & \bfseries 95.01 (0.34) & \bfseries 98.51 (0.17) & \bfseries 97.70 (0.21) & \bfseries 95.69 (0.31) & 23.68 (0.38) & \bfseries 50.64 (0.52) & \bfseries 84.96 (0.42) & 63.03 (0.48) & \bfseries 73.04 (0.47) \\

			\midrule

			\multirow{6}{*}{\textbf{1-shot}}
			& \textbf{Linear Probe} & 80.32 (0.70) & 83.06 (0.66) & 79.02 (0.68) & 71.91 (0.72) & \bfseries 22.35 (0.35) & 36.29 (0.53) & 45.91 (0.59) & \bfseries 49.96 (0.56) & 51.14 (0.56) \\
			& \textbf{ProtoHead}    & 78.89 (0.72) & 81.95 (0.71) & 78.31 (0.69) & 70.48 (0.75) & 22.33 (0.36) & 36.25 (0.52) & 45.29 (0.58) & 48.64 (0.58) & 50.58 (0.54) \\
			& \textbf{SNAIL}        & 80.82 (0.70) & 88.96 (0.58) & 87.30 (0.60) & 81.81 (0.68) & 20.61 (0.30) & 29.93 (0.45) & 60.26 (0.67) & 45.01 (0.57) & 51.82 (0.61) \\
			& \textbf{GPICL}        & 74.30 (0.80) & 85.63 (0.65) & 88.92 (0.57) & 78.29 (0.74) & 20.72 (0.29) & 30.71 (0.44) & 30.63 (0.48) & 41.63 (0.56) & 49.91 (0.57) \\
			& \textbf{CAML}         & 84.33 (0.65) & 92.34 (0.49) & 90.74 (0.52) & 84.28 (0.64) & 21.86 (0.36) & 35.53 (0.54) & 61.73 (0.67) & 45.77 (0.60) & 53.28 (0.63) \\
			& \textbf{\oursp}       & \bfseries 89.42 (0.56) & \bfseries 95.51 (0.38) & \bfseries 93.84 (0.42) & \bfseries 90.23 (0.53) & 22.15 (0.38) & \bfseries 36.74 (0.58) & \bfseries 70.25 (0.66) & 48.00 (0.61) & \bfseries 55.71 (0.63) \\

			\bottomrule
		\end{tabular}
	}
	\tableend
\end{table*}

The 5-shot and 1-shot performance in the out-of-domain setting are shown in Table~\ref{tab:fsl-ood}.
Some datasets may contain certain classes with a semantic overlap with the training set. These are marked with an asterisk. 
The medical datasets from MedIMeta were curated to exclude classes with semantic overlap.
\ours\ achieved state-of-the-art performance on the majority of datasets and is competitive on the two remaining datasets, without any domain-specific retraining.

\textbf{Cross-Modal Generalization to Unseen Modalities:}
The key test of practical universality is the ability to generalize to completely different modalities without retraining.
To assess the limits of generalizability, we evaluated the models trained exclusively on images on a text classification and an audio classification problem.
Here, we only included the baselines which architecturally permit operating in a different feature space than was used in the meta-training stage: Linear Probing, ProtoHead, GPICL and \ours.
We used sentiment classification of IMDB movie reviews \citep{maasLearningWordVectors2011} and music genre classification on the GTZAN dataset \citep{tzanetakisMusicalGenreClassification2002} as the evaluation tasks.

\begin{table}[t]
	\centering
	\beforecaption
	\caption{Cross-modal performance: models were trained on image classification tasks and tested on text and audio classification tasks.}
	\label{tab:test-cross-modal}
	{\tiny\setlength{\tabcolsep}{3.9pt}
		\begin{tabular}{@{\hspace{3.5pt}}lUUUU@{\hspace{3.5pt}}}
			\toprule
			& \multicolumn{2}{c}{Text (IMDB)} & \multicolumn{2}{c}{Audio (Music Genre)} \\
			\cmidrule(lr){2-3} \cmidrule(lr){4-5}
			& {5-shot}     & {1-shot} & {5-shot}     & {1-shot}     \\ \midrule
			{Linear Probe}                & 89.33 (0.44) & \bfseries 85.32 (1.03) & 38.02 (0.57) & 29.58 (0.51) \\
			{ProtoHead}                    & 88.92 (0.44) & 83.86 (1.13) & 54.74 (0.58) & 39.09 (0.63) \\
			{GPICL (trained on images)}   & 50.88 (0.42) & 50.31 (0.28) & 20.03 (0.08) & 19.98 (0.13) \\
			{\oursp\ (trained on images)} & \bfseries 89.62 (0.48) & 84.87 (1.04) & \bfseries 55.33 (0.60) & \bfseries 40.63 (0.62) \\ \bottomrule
		\end{tabular}%
	}%
	\tableend
\end{table}

The results in Table~\ref{tab:test-cross-modal} show that \ours\ achieves superior cross-modal generalization on both text and audio.
On text classification, \ours\ outperforms all approaches in the 5-shot setting and is only slightly outperformed by Linear Probing in the 1-shot setting.
On audio classification, \ours\ outperforms all baselines in both settings.
Out of the meta-learned methods, \ours\ maintains the strongest performance when applied to completely different modalities.
While GPICL can theoretically handle different modalities, its performance degrades severely, to the point that the accuracy is on the level of random chance.

\textbf{Label Space Extrapolation:}
Traditional meta-learning methods fail when confronted with tasks containing more classes than seen during training.
To illustrate that \ours\ gracefully handles label space extrapolation, we used the meta-trained learners from Section~\ref{sec:in-domain-exp}, which were trained only on task instances with $K\leq5$.
We then evaluated performance with increasing numbers of classes up to 100-way classification.
We also took advantage of \ours's computational efficiency to train a version with 50 labels used in the meta-training stage (\ours\ 50w), which is computationally infeasible for the other attention-based approaches.

As can be seen in Figure~\ref{fig:task-size-ablation-1shot} (left), performance degraded with more labels $K$ per tasks as is expected.
\ours\ achieved the top performance until 70-way classification tasks, where it was outperformed by Linear Probing and ProtoHead which require a domain-specific classification head.
We further note that \ours\ trained with tasks up to 50 labels significantly outperforms the baselines throughout the testing scenario.

\begin{figure}[ht]
	\centering
    \vspace{1em}
	\includegraphics[width=\linewidth]{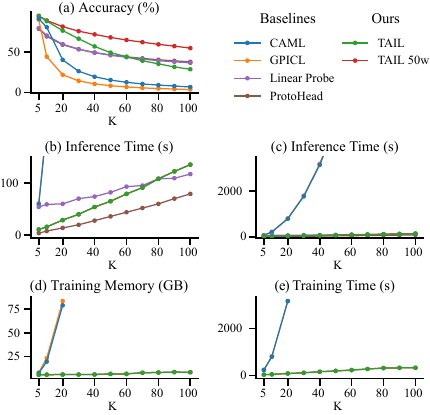}%
	\beforecaption
	\caption{
		\textbf{(a)}: performance degradation with increasing number of classes (1-shot setting).
		\textbf{(b)} and \textbf{(c)}: wall clock time for 1000 test episodes as a function of task size.
			Two different scales show the relation to the algorithm-explicit baselines and to the meta-learning baselines.
		\textbf{(d)}: memory usage during training as a function of task size, \textbf{(e)}: wall clock time for 1000 training episodes.
	}
	\label{fig:task-size-ablation-1shot}
\end{figure}

\textbf{Computational Efficiency for Large Label Sets:}
While GPICL~\citep{kirschGeneralPurposeInContextLearning2024} and CAML~\citep{fiftyContextAwareMetaLearning2023} can theoretically handle arbitrary label spaces, in practice they are severely computationally limited at larger task sizes.
While evaluation with large label spaces is still possible up to a point, training with label spaces larger than 20 is computationally prohibitive on current computational infrastructures.
In contrast, \ours\ provides dramatic computational advantages over existing attention-based meta-learning approaches. To illustrate this, we measured the wall clock time for solving a 1-shot meta-test task with increasing numbers of labels ($K$).
As can be seen in Figure~\ref{fig:task-size-ablation-1shot} (c), the wall clock time of GPICL and CAML increases very rapidly with increasing $K$, while \ours\ retains a similar computational complexity to the methods not based on transformers (i.e.\ Linear Probing and ProtoHead).
Training time and memory usage (Figure~\ref{fig:task-size-ablation-1shot} (d,e)) show an even more dramatic difference. Training GPICL and CAML becomes infeasible for $K\geq 20$.
Figure~\ref{fig:task-size-ablation-1shot} (b) shows that \ours\ is in fact 
faster than Linear Probing for typical task sizes below $K=70$ since it does not require training at meta-test time.
The results for the 5-way task look similar (see Appendix~\ref{sec:accuracy-degradation}).
We measured only the time and memory requirements of TAIL and the baselines without the pretrained encoder.

\textbf{Generalization to Different Encoders:}
To isolate the contribution of the meta-learning algorithm from the pretrained encoder, we replaced the ViT-H encoder with a substantially weaker ResNet-18 (pretrained on ImageNet-1k) at test time only.
\ours\ was still meta-trained with ViT-H features.
This tests how well the meta-learned algorithm transfers when the underlying representations change drastically.
Table~\ref{tab:resnet-avg} shows the average accuracy across in-domain and cross-domain datasets (full per-dataset results in Appendix~\ref{sec:resnet-ablation}).

\begin{table}[t]
	\centering
	\beforecaption
	\caption{Average accuracy (\%) with ResNet-18 encoder (encoder swapped test time only).}
	\label{tab:resnet-avg}
	{\tiny\setlength{\tabcolsep}{4pt}
		\begin{tabular}{lUUUU}
			\toprule
			& \multicolumn{2}{c}{In-domain} & \multicolumn{2}{c}{Cross-domain} \\
			\cmidrule(lr){2-3} \cmidrule(lr){4-5}
			& {5-shot} & {1-shot} & {5-shot} & {1-shot} \\ \midrule
			{Linear Probe} & 76.25 (0.26) & 57.07 (0.34) & 54.20 (0.18) & 40.33 (0.19) \\
			{ProtoHead}    & 87.59 (0.21) & 70.58 (0.33) & 63.67 (0.16) & 46.49 (0.20) \\
			{\oursp}       & \bfseries 88.91 (0.21) & \bfseries 79.46 (0.31) & \bfseries 66.08 (0.16) & \bfseries 53.32 (0.20) \\ \bottomrule
		\end{tabular}%
	}%
	\tableend
\end{table}

With the weaker encoder, all methods drop considerably, but \ours\ consistently outperforms the baselines.
Notably, \ours's advantage is \emph{larger} with ResNet-18 than with ViT-H, demonstrating that the meta-learned algorithm contributes more when the encoder is weaker.


\section{Discussion and Conclusion}

We introduced a theoretical framework for meta-learning that addresses the lack of precise definitions in prior work.
Our framework provides two main contributions: the notion of \emph{practical universality}, which formally specifies what it means for a learning algorithm to generalize across varying feature domains, label spaces, and data distributions with finite-sample guarantees, and a taxonomy distinguishing between \emph{algorithm-explicit vs.\ algorithm-implicit} approaches, which explains why certain meta-learning paradigms generalize better than others and provides a principled lens for future work.

Guided by this framework, we developed \ours, a meta-learning approach designed specifically to satisfy practical universality and improving upon prior work.
Our technical contributions include random projections for cross-modal generalization and label embeddings with a global dictionary to generalize to new label domains, including domains with far more classes than seen in training.

Empirically, TAIL sets new state-of-the-art results, handles tasks with label spaces far larger than those seen during training, with up to $20\times$ more classes, and offers large computational savings.
Most crucially, it transfers across unseen modalities without retraining and achieves strong performance on text and audio classification after training only on images.

Although \ours\ is less reliant on encoder quality than encoder-only baselines (as shown in Section~\ref{sec:in-domain-exp}), it still benefits from strong pretrained representations for state-of-the-art performance.
The practical universality of \ours\ is currently limited to classification. While the theoretical framework is task-type-agnostic, extending \ours\ to other task types remains future work.
Future research directions include extending algorithm-implicit learning to regression and structured prediction, and investigating whether similar advantages arise in reinforcement learning.





\section*{Acknowledgements}

Funded by the Deutsche Forschungsgemeinschaft (DFG, German Research Foundation) under Germany’s Excellence Strategy
– EXC number 2064/1 – Project number 390727645.
The authors thank the International Max Planck Research School for Intelligent Systems (IMPRS-IS) for supporting Stefano Woerner.
%

\section*{Impact Statement}

From a positive societal perspective, practically universal few-shot learners may contribute to more data-efficient and computationally efficient AI systems. This could reduce the environmental footprint associated with repeated retraining and fine-tuning, and enable broader access to machine learning technologies beyond organizations with large-scale data and compute resources. The ability to transfer across modalities without architectural changes may also support faster prototyping and reuse of models across disciplines.

At the same time, increased generality introduces risks if such systems are deployed without sufficient domain understanding or validation. A model that adapts broadly across tasks may be applied in inappropriate contexts or trusted beyond its empirical guarantees, particularly in high-stakes domains. While this work focuses on methodological foundations and controlled benchmarks, responsible deployment would require domain-specific evaluation, uncertainty awareness, and human oversight. 

\bibliography{meta-transformer-icml}
\bibliographystyle{icml2026}

\newpage
\appendix
\onecolumn

\section{Theoretical Analysis and Proofs}\label{sec:proofs}
In this Appendix, we formalize and prove the theoretical properties described in Section~\ref{sec:method} of the main article.

\subsection{Permutation Equivariance with respect to the Label Space}

\begin{theorem}[Equivariance to label re-indexing]\label{thm:equivariance-reindexing}
	Let $\FD$ be a feature space, $\LD$ a label space and $\Ss = \{(x_i, y_i)\}_{i=1}^{n}$ a support dataset and $(x,y)$ a query sample.
	For any permutation $\sigma$ of $\LD$, let
	\[
	\Ss^\sigma = \{(x_i, \sigma(y_i))\}_{i}, \quad y_\mathrm{q}^\sigma = \sigma(y_\mathrm{q})\text{\,.}
	\]
	Then 
	\[ \dci(\Ss^\sigma, x) \overset{d}{=} \sigma(\dci(\Ss,x)) \text{\,.} \]
	i.e.\ $\dci_\theta(\Ss, x)$ is equivariant in distribution to the reindexing of $\LD$.
\end{theorem}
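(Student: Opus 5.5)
The proof works by viewing $\dci_\theta$ as a random function: given $\Ss$ and $x$, we draw an injection $\rho$ uniformly from $\mathrm{Inj}(\LD,[M])$ and a projection $\pi$, and then output
\[
\dci_\theta(\Ss,x;\rho,\pi)=\rho^{-1}\bigl(\argmax_{j\in\rho(\LD)} s_j(\trenc(Z_{\rho,\pi}))\bigr),
\]
where $Z_{\rho,\pi}$ is the token sequence built from the pairs $(\pi(\phi(x_i)),\mathcal{E}(\rho(y_i)))$ and the query token $(\pi(\phi(x)),c)$. The central observation is that the labels $y_i$ reach the network only through the composite $\rho(y_i)$, and the output is converted back to $\LD$ only through $\rho^{-1}$. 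We therefore reparametrize the randomness. For a fixed permutation $\sigma$ of $\LD$, the map $\rho\mapsto\rho\circ\sigma$ is a bijection of $\mathrm{Inj}(\LD,[M])$ onto itself, so it preserves the uniform distribution. Hence $\rho':=\rho\circ\sigma$ is again uniform on $\mathrm{Inj}(\LD,[M])$, and it is independent of $\pi$ because $\sigma$ is deterministic.

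The key steps are as follows. First, we compute the input for the permuted support set. With injection $\rho$, the support token for the $i$-th pair of $\Ss^\sigma$ carries the label embedding $\mathcal{E}(\rho(\sigma(y_i)))=\mathcal{E}(\rho'(y_i))$. The feature parts and the query token do not depend on labels. Consequently $Z^{\Ss^\sigma}_{\rho,\pi}=Z^{\Ss}_{\rho',\pi}$ as sequences, in the same order, so no appeal to order invariance is needed. Second, the active index sets agree, $\rho(\LD)=\rho(\sigma(\LD))=\rho'(\LD)$, because $\sigma$ is a bijection of $\LD$. The argmax index $\hat\jmath$ is therefore the same in both computations; if there are ties, we fix a deterministic tie-breaking rule on $[M]$ so that the equality still holds. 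Third, we map back: $\rho^{-1}(\hat\jmath)=\sigma\bigl(\sigma^{-1}(\rho^{-1}(\hat\jmath))\bigr)=\sigma\bigl(\rho'^{-1}(\hat\jmath)\bigr)$. Combining the three steps gives the pathwise identity
\[
\dci_\theta(\Ss^\sigma,x;\rho,\pi)=\sigma\bigl(\dci_\theta(\Ss,x;\rho\circ\sigma,\pi)\bigr).
\]

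Finally, since $(\rho\circ\sigma,\pi)\overset{d}{=}(\rho,\pi)$, the right-hand side has the same law as $\sigma(\dci_\theta(\Ss,x;\rho,\pi))$, which is the claimed identity $\dci(\Ss^\sigma,x)\overset{d}{=}\sigma(\dci(\Ss,x))$. The query label $y_\mathrm{q}^\sigma$ appears in the statement only to define the loss and plays no role in this argument. The step needing the most care is the bookkeeping in the first step: we must check that the learnable query marker $c$, the linear head $s$, and the transformer $\trenc$ contain no label-dependent component other than the embeddings indexed through $\rho$. In particular, the construction must involve no fixed, unrandomized ordering of $\LD$; for example, a restriction of the head that used a canonical ordering of the labels would break the equivariance. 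If multiple queries are processed inline, the same argument applies to each query token, because the same $\rho$ is shared across the whole episode.
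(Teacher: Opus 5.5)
Your proof is correct and follows the paper's argument: the paper sets $\rho' = \rho\circ\sigma^{-1}$ and verifies the pathwise identity $\dci(\Ss^\sigma,x;\rho') = \sigma(\dci(\Ss,x;\rho))$ using $\rho'(\LD)=\rho(\LD)$ and $\rho'^{-1}=\sigma\circ\rho^{-1}$. It then concludes from the invariance of the uniform law on injections, so your substitution $\rho\mapsto\rho\circ\sigma$ is the same reparametrization run in the mirror direction; your explicit treatment of $\pi$, tie-breaking, and inline multi-query processing is bookkeeping the paper leaves implicit.
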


\begin{proof}[Proof of Theorem \ref{thm:equivariance-reindexing}]
	Let $\mathcal{E}$ be our embedding dictionary and $\rho:\LD \to[M]$ be an injection sampled uniformly from the set of all injections $\mathrm{Inj}(\LD_T,[M])$.
	Define $\rho' \defeq \rho \circ \sigma^{-1}$.
	We note that the images under $\rho$ and $\rho'$ are equal, i.e.\ $\rho(\LD) = \rho'(\LD)$.
	Then
	\begin{gather*}
		\dci(\Ss^\sigma,x\,;\rho')
		= \rho'^{-1}\left(\argmax_{j\in \rho'(\LD)}\;s_j\left(\trenc\left(\begin{bmatrix}
			\pi(\phi(x_1)) & \dots & \pi(\phi(x_n)) & \pi(\phi(x)) \\
			\mathcal{E}(\rho'(\sigma(y_1))) & \dots & \mathcal{E}(\rho'(\sigma(y_n))) & c
		\end{bmatrix}\right)\right)\right)\\
		= \sigma\left(\rho^{-1}\left(\argmax_{j\in \rho(\LD)}\;s_j\left(\trenc\left(\begin{bmatrix}
			\pi(\phi(x_1)) & \dots & \pi(\phi(x_n)) & \pi(\phi(x)) \\
			\mathcal{E}(\rho(y_1)) & \dots & \mathcal{E}(\rho(y_n)) & c
		\end{bmatrix}\right)\right)\right)\right)
		= \sigma\left( \dci(\Ss, x\,;\rho)\right) \text{\,.}
	\end{gather*}
	Since $\rho$ is sampled uniformly over injections, $\rho' \defeq \rho \circ \sigma^{-1}$ and $\rho$ have the same probability.
	Combining, the prediction distributions are identical.
\end{proof}

\subsection{Coverage of the Embedding Dictionary}

Even though each episode only uses $K$ out of $M$ embeddings, we show that across episodes all embeddings and their corresponding \enquote{detectors} in the transformer are trained.

\begin{proposition}[Unbiased gradients]
	For each embedding $e_j$, the episode gradient satisfies
	\[
	\E_{\rho}[\nabla_{e_j} \loss] = \tfrac{K}{M} \, \E\!\left[ \nabla_{e_j} \loss \mid j \in S \right].
	\]
	Thus stochastic gradients are unbiased up to the constant factor $\tfrac{K}{M}$.
\end{proposition}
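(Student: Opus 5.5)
The identity is the law of total expectation over $\rho$, split on whether $j$ lies in the image $S \defeq \rho(\LD)$ of the sampled injection. Here I read $S$ as the set of active indices and $\E[\,\cdot \mid j\in S]$ as the expectation over $\rho$ (and the episode data) conditioned on $j$ being active. For fixed $j\in[M]$ I would write
\[
\E_\rho[\nabla_{e_j}\loss] = \Pr_\rho(j\in S)\,\E[\nabla_{e_j}\loss \mid j\in S] + \Pr_\rho(j\notin S)\,\E[\nabla_{e_j}\loss \mid j\notin S],
\]
and then show two things: the second conditional expectation vanishes, and $\Pr_\rho(j\in S)=K/M$.

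\textbf{Step 1 (inactive embeddings receive zero gradient).} If $j\notin\rho(\LD)$, then $e_j$ never appears as an input token, since the label rows are $\mathcal{E}(\rho(y_i))$ and the query uses the marker $c$. The loss is computed from the scores $s_{j'}$ with $j'$ restricted to $\rho(\LD)$. Hence $\loss$ is a constant function of $e_j$ on this event, and $\nabla_{e_j}\loss=0$ identically. One caveat: if $s$ were tied to $\mathcal{E}$ (weight tying), $e_j$ could still enter through $s_j$; but $s_j$ is excluded by the restricted $\argmax$/softmax, so the conclusion holds in either case.

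\textbf{Step 2 (inclusion probability).} The image $\rho(\LD)$ of a uniform injection $\LD\to[M]$ is a uniformly random $K$-subset of $[M]$. This follows because each $K$-subset is the image of exactly $K!$ injections, and $\rho$ is uniform over $\mathrm{Inj}(\LD,[M])$. Equivalently, by symmetry each index is equally likely to be active, and the expected number of active indices is exactly $K$. Either argument gives $\Pr(j\in S)=\binom{M-1}{K-1}/\binom{M}{K}=K/M$.

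Combining the two steps yields $\E_\rho[\nabla_{e_j}\loss]=\tfrac{K}{M}\E[\nabla_{e_j}\loss\mid j\in S]$. Since the factor $K/M$ is the same for every $j$, the gradients are unbiased up to a global rescaling that can be absorbed into the learning rate.

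The main obstacle is making Step 1 airtight. One has to state exactly which parameters depend on $e_j$ and check that no normalization or shared component (for example, a softmax over all $M$ scores) reintroduces a dependence on inactive embeddings. I would handle this by fixing the loss as cross-entropy over the restricted index set $\rho(\LD)$, matching the definition of the head $\head$.
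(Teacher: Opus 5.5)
Your proof is correct. The paper states this proposition without proof, so there is no argument of the authors' to compare against. Your argument is the natural one and presumably what they intend:
\begin{itemize}
\item apply the law of total expectation on the event $j\in\rho(\LD)$;
\item note that $\nabla_{e_j}\loss=0$ on the complement, because $e_j$ then enters neither a token nor an active score;
\item use $\Pr(j\in\rho(\LD))=\binom{M-1}{K-1}/\binom{M}{K}=K/M$, which holds because a uniform injection has a uniformly distributed $K$-subset as its image.
\end{itemize}
Your version also makes explicit several assumptions that the statement leaves implicit. First, $S$ has to mean the active index set $\rho(\LD)$, since the paper also uses $S$ for the support set. Second, $K$ is held fixed inside the expectation. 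If you also average over episodes of varying size, the factor becomes $\E[K]/M$, which is still uniform across $j$. Third, the training loss must be a differentiable surrogate such as cross-entropy over the active scores. Taken literally, the paper's episode loss passes through the $\argmax$ in $\head$. Under that loss every $\nabla_{e_j}\loss$ vanishes almost everywhere, and the identity degenerates to $0=0$.
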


\begin{proposition}[Coverage over $t$ episodes]
	Let $N_j(t)$ be the number of episodes in which $e_j$ is included. Then
	\[
	N_j(t) \sim \mathrm{Binomial}\!\left(t, \tfrac{K}{M}\right), \quad
	\E[N_j(t)] = \tfrac{tK}{M}. 
	\]
	By Chernoff bound, for any $\delta \in (0,1)$,
	\[
	\Pr\left[ N_j(t) \le (1-\delta)\tfrac{tK}{M} \right] \le \exp\!\left(-\frac{\delta^2 tK}{2M}\right) \text{\,.}
	\]
	It follows that with high probability every embedding is updated $\Omega(\tfrac{tK}{M})$ times once $t \gtrsim \tfrac{M}{K} \log M$.
\end{proposition}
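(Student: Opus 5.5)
The plan is to model the per-episode inclusion of index $j$ as a Bernoulli trial, sum these across episodes, and then apply a multiplicative Chernoff bound together with a union bound over the $M$ dictionary entries.

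\textbf{Step 1: inclusion probability.} Fix $j\in[M]$. In each episode an injection $\rho$ is drawn uniformly from $\mathrm{Inj}(\LD_T,[M])$ with $\abs{\LD_T}=K$. The image $\rho(\LD_T)$ is then a uniformly random $K$-subset of $[M]$. To see this, note that permuting $[M]$ acts transitively on the injections, so each $K$-subset arises as the image of exactly $K!$ injections. Consequently
\[
\Pr[j\in\rho(\LD_T)] = \binom{M-1}{K-1}\Big/\binom{M}{K} = \tfrac{K}{M}.
\]

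\textbf{Step 2: the binomial law.} Let $I_s=\mathbf{1}[j\in\rho_s(\LD_{T_s})]$ be the indicator for episode $s$. The injections $\rho_s$ are sampled independently across episodes, and their law does not depend on the data or on the current parameters. Hence the $I_s$ are i.i.d.\ Bernoulli$(K/M)$ and $N_j(t)=\sum_{s=1}^t I_s\sim\mathrm{Binomial}(t,K/M)$, so linearity gives $\E[N_j(t)]=tK/M$. This step assumes a fixed $K$. If $K$ varies across episodes, I would state the result with $K$ replaced by $K_{\min}$: the indicators are then independent Bernoulli variables with success probability at least $K_{\min}/M$, and the Chernoff bound below still applies to their sum.

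\textbf{Step 3: tail bound.} Apply the standard multiplicative Chernoff lower-tail bound $\Pr[X\le(1-\delta)\mu]\le e^{-\delta^2\mu/2}$ with $\mu=tK/M$. This yields the displayed inequality directly.

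\textbf{Step 4: uniform coverage.} Take a union bound over the $M$ indices, for example with $\delta=1/2$:
\[
\Pr\bigl[\exists j:\,N_j(t)\le \tfrac{tK}{2M}\bigr]\le M\exp\!\bigl(-\tfrac{tK}{8M}\bigr).
\]
Choosing $t\ge \tfrac{8M}{K}\log(M/\eta)$ makes the right-hand side at most $\eta$. Therefore, with probability at least $1-\eta$, every embedding is updated at least $\tfrac{tK}{2M}=\Omega(tK/M)$ times, which gives the claimed threshold $t\gtrsim \tfrac{M}{K}\log M$.

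The only step requiring real care is Step 1, namely justifying that a uniform injection induces a uniform image set. This reduces to the counting argument in Step 1, in which the fibres of the map "injection $\mapsto$ image" all have size $K!$. I expect everything else to be routine.
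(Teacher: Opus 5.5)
Your proposal is correct and follows the same route as the paper, which simply asserts the binomial law for $N_j(t)$ and invokes the multiplicative Chernoff lower-tail bound. You also supply the two steps the paper leaves implicit: the counting argument that $\rho(\LD_T)$ is a uniform $K$-subset (giving inclusion probability $K/M$), and the union bound over the $M$ indices that produces the $\tfrac{M}{K}\log M$ threshold.
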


\begin{remark}
	Since every episode includes at least one label embedding, transformer parameters $\theta_\trenc$ interacting with embeddings receive gradient updates in every episode. By symmetry of $\rho$, these ``detectors'' are trained uniformly across all embeddings.
\end{remark}

\subsection{Demonstration Order Invariance}
It is a desirable quality for the DCI predictor to be invariant to the order in which the support set is presented. In the following theorem we show that this is indeed the case for \ours .

\begin{theorem}[Demonstration Order Invariance]
	Let $\Ss$ be a sequence of support samples $\Ss = ((x_i, y_i))_{i=1}^{n}$ and $(x,y)$ a query sample.
	For any permutation $\sigma$ of $\Ss$
	\[ \dci(\sigma(\Ss), x) = \dci(\Ss,x) \text{\,.} \]
	i.e.\ $\dci_\theta(\Ss, x)$ is invariant to the order of demonstrations.
\end{theorem}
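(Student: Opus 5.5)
The plan is to write $\dci$ explicitly as a composition of tokenwise maps, a permutation-equivariant transformer $\trenc$, and a head that reads only the query output. Fix the random choices of the episode, namely the projection $\pi$ and the injection $\rho$. The input sequence is $Z = (z_1,\dots,z_n,z')$, where $z_i = [\pi(\phi(x_i)); \mathcal{E}(\rho(y_i))]$ and $z' = [\pi(\phi(x)); c]$. Each token $z_i$ depends only on the pair $(x_i,y_i)$ and not on its position. This is because \ours\ uses no positional encodings: the sequence length is flexible and the transformer is non-causal, as noted in Table~\ref{tab:method-comparison} and Section~\ref{sec:related-work}. Hence permuting the support pairs by $\sigma$ yields the token sequence $Z^\sigma = (z_{\sigma(1)},\dots,z_{\sigma(n)},z')$, in which the query token is unchanged.

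The key step is to show that the non-causal transformer encoder $\trenc$ is permutation-equivariant: $\trenc(PZ) = P\,\trenc(Z)$ for any permutation matrix $P$. I will check this layer by layer. Tokenwise maps (the MLP, LayerNorm, and residual connections) commute with permutations trivially. For unmasked self-attention with $Q=ZW_Q$, $K=ZW_K$, $V=ZW_V$, we have
\[
\mathrm{softmax}\!\left(PZW_Q W_K^\top Z^\top P^\top/\sqrt{d}\right)PZW_V = P\,\mathrm{softmax}\!\left(ZW_QW_K^\top Z^\top/\sqrt{d}\right)Z W_V .
\]
This holds because the row-wise softmax commutes with simultaneous row and column permutations, and $P^\top P = I$. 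Multi-head concatenation and the output projection act tokenwise, so equivariance is preserved. Composing over layers gives equivariance of $\trenc$.

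Now take $P$ that permutes only the first $n$ tokens and fixes the last one. The query output $\trenc(Z^\sigma)_{n+1}$ then equals $\trenc(Z)_{n+1}$. The head $\head$ applies the linear scores $s$ to this query output, restricts the argmax to $\rho(\LD)$, and maps back through $\rho^{-1}$. It therefore returns the same label, so $\dci(\sigma(\Ss),x;\pi,\rho) = \dci(\Ss,x;\pi,\rho)$ for every fixed $(\pi,\rho)$. Because the identity holds pointwise in the randomness, it holds both as an equality of random variables and in distribution. The same argument covers inline query processing with several queries: queries are likewise position-free, and the query marker $c$ makes them distinguishable from support tokens by content rather than by position.

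The main obstacle is not the calculation but making explicit the architectural assumptions the argument relies on. These are: no positional encodings, no causal or other position-dependent attention mask, and argmax tie-breaking that depends only on the index $j \in [M]$ and not on sequence positions. I would state these as hypotheses on $\trenc$, taken from the description in Section~\ref{sec:method}, and note that this is exactly the property that fails for the causal baselines SNAIL and GPICL.
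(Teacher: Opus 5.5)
Your proof is correct and follows the paper's route: position-free tokenwise embedding via $\phi$, $\pi$ and $\mathcal{E}\circ\rho$, permutation equivariance of the unmasked transformer, and a head that reads only the fixed query position. The differences are minor refinements. The paper cites equivariance from Kossen et al.\ and loosely calls $\trenc$ ``invariant'', whereas you verify equivariance layer by layer. You also explicitly fix $(\pi,\rho)$, which is needed because the theorem asserts exact rather than distributional equality.
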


\begin{proof}
	Building on \citep[Appendix A]{kossenSelfAttentionDatapointsGoing2021}, we only need to prove that the domain-specific encoder and the random injection embedding are equivariant to the order of demonstrations. Since $\phi, \rho$ are element-wise applied the input sequence and do not depend on position, it trivially follows that embedding the sequence is permutation equivariant.
	Since the transformer $\trenc$ itself is permutation invariant following \citep[Appendix A]{kossenSelfAttentionDatapointsGoing2021}, and $\head$ only operates on the index of the query sample, $\dci_\theta$ is invariant to permutations.
\end{proof}

\subsection{Extended Permutations}

\label{sec:extendedpermutation}
\begin{definition}[Extended permutation]
	Let $n \leq k$.
	An \emph{extended permutation matrix} is a binary matrix $E \in \{0,1\}^{k \times n}$ such that each column contains exactly one $1$, each row contains at most one $1$.
	Equivalently, $E$ encodes an injective map  $\pi : \{1,\dots,n\} \to \{1,\dots,k\}$.
\end{definition}

\subsection{Feature Domain Invariance}\label{sec:feature-domain-invariance}

We now prove that the validity of \ours\ as a learning algorithm is invariant to the feature domain $\FD_T$.
The key mechanism enabling this invariance is the random extended permutation $\pi$ that projects encoded features into the common latent space.

\begin{definition}[Coordinate-Symmetric Learning]
	A DCI function $\dci_\theta$ exhibits coordinate-symmetric learning if for any permutation $\sigma$ of the coordinates of the latent space $\R^{d_\text{data}}$, the function $\dci_\theta$ trained with projections $\{\pi_i\}$ has the same expected performance as $\dci_\theta$ trained with projections $\{\sigma \circ \pi_i\}$.
\end{definition}

\begin{proposition}[Coordinate Coverage]\label{prop:coordinate-coverage}
	Let $\pi: \R^{d_T} \to \R^{d_\text{data}}$ be an extended permutation sampled uniformly from $\mathrm{Inj}([d_T], [d_\text{data}])$.
	Let $N_j(t)$ be the number of training episodes in which coordinate $j \in [d_\text{data}]$ receives a non-zero input.
	Then:
	\[
	N_j(t) \sim \mathrm{Binomial}\left(t, \frac{d_T}{d_\text{data}}\right), \quad \E[N_j(t)] = \frac{t \cdot d_T}{d_\text{data}}.
	\]
	By Chernoff bound, for any $\delta \in (0,1)$:
	\[
	\Pr\left[N_j(t) \leq (1-\delta) \frac{t \cdot d_T}{d_\text{data}}\right] \leq \exp\left(-\frac{\delta^2 t \cdot d_T}{2 \cdot d_\text{data}}\right).
	\]
	Thus, with high probability, every coordinate of the latent space is updated $\Omega\left(\frac{t \cdot d_T}{d_\text{data}}\right)$ times once $t \gtrsim \frac{d_\text{data}}{d_T} \log d_\text{data}$.
\end{proposition}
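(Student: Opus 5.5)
The plan is to follow the template of the coverage proposition for the embedding dictionary. Fix a coordinate $j \in [d_\text{data}]$ and treat the $t$ training episodes as independent. Each episode draws a fresh extended permutation $\pi$ uniformly from $\mathrm{Inj}([d_T],[d_\text{data}])$, independently of all other episodes. For simplicity we take $d_T$ constant across episodes; if it varies, one conditions on the sequence of $d_{T}$ values and gets a Poisson-binomial, to which the same Chernoff bound applies with the mean replaced by $\sum_s d_{T_s}/d_\text{data}$. Coordinate $j$ receives a non-zero input in an episode exactly when $j$ lies in the image $\pi([d_T])$, up to the measure-zero event that the encoded feature value routed there is exactly zero. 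We read ``receives input'' as $j \in \pi([d_T])$. Then $N_j(t)=\sum_{s=1}^t \mathbf{1}[j\in\pi_s([d_T])]$ is a sum of i.i.d.\ Bernoulli variables, and it remains to compute their common success probability.

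The key combinatorial step is to show $\Pr[j \in \pi([d_T])] = d_T/d_\text{data}$. I would argue by symmetry. Composing $\pi$ with any permutation $\tau$ of $[d_\text{data}]$ is a bijection of $\mathrm{Inj}([d_T],[d_\text{data}])$ onto itself, so the uniform law is invariant under it. Hence $\Pr[j\in\pi([d_T])]$ is the same for every $j$. Since $\sum_j \mathbf{1}[j\in \pi([d_T])] = |\pi([d_T])| = d_T$ deterministically, taking expectations gives $d_\text{data}\cdot \Pr[j\in\pi([d_T])]=d_T$. As a check, one can also count directly: there are $d_T\cdot (d_\text{data}-1)!/(d_\text{data}-d_T)!$ injections hitting $j$, out of $d_\text{data}!/(d_\text{data}-d_T)!$ in total. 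This yields the Binomial law and the mean $t d_T/d_\text{data}$.

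For the tail bound, apply the standard multiplicative lower-tail Chernoff inequality $\Pr[X\le(1-\delta)\mu]\le e^{-\delta^2\mu/2}$ for sums of independent Bernoullis with $\mu = t d_T/d_\text{data}$. This gives the displayed bound directly.

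For the final ``with high probability for every coordinate'' claim, fix $\delta=1/2$ and take a union bound over the $d_\text{data}$ coordinates. The failure probability is then at most $d_\text{data}\exp(-t d_T/(8 d_\text{data}))$, which falls below $d_\text{data}^{-c}$ once $t\ge 8(1+c)\,(d_\text{data}/d_T)\log d_\text{data}$. Thus every coordinate is hit at least $t d_T/(2d_\text{data})=\Omega(t d_T/d_\text{data})$ times.

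The main subtlety is not the calculation. It is the modeling step of identifying ``updated'' or ``non-zero input'' with membership in the image of $\pi$, and ensuring independence across episodes. I would state both explicitly as assumptions on the sampling procedure: fresh $\pi$ per episode, and encoder outputs almost surely non-zero.
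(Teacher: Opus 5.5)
Your proof is correct and follows essentially the paper's own route. The paper gives no separate proof, only the sketch built into the statement (a Binomial law, then the multiplicative Chernoff lower tail); your symmetry/counting computation of $\Pr[j\in\pi([d_T])]=d_T/d_\text{data}$ and your union bound with $\delta=1/2$ fill in the omitted details, including where the $\log d_\text{data}$ threshold comes from. Your explicit assumptions (a fresh independent $\pi$ per episode, fixed $d_T$, and encoder outputs that are almost surely non-zero) are exactly what the paper's Binomial claim tacitly relies on, so stating them strengthens the argument rather than departing from it.
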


\begin{theorem}[Equivariance to Feature Coordinate Permutation]\label{thm:feature-equivariance}
	Let $\phi_T: \FD_T \to \R^{d_T}$ be an encoder and $\sigma$ be any permutation of $[d_T]$.
	Define the permuted encoder $\phi_T^\sigma(x) = \sigma(\phi_T(x))$.
	Then for any support set $\Ss$ and query $x$:
	\[
	\dci_\theta(\Ss, x; \pi \circ \phi_T) \overset{d}{=} \dci_\theta(\Ss, x; \pi \circ \phi_T^\sigma)
	\]
	where the equality is in distribution over the random choice of $\pi$.
\end{theorem}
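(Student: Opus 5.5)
The approach is a change of variables on the random extended permutation, the same pattern as the proof of Theorem~\ref{thm:equivariance-reindexing}. Identify $\sigma$ with its permutation matrix $P_\sigma\in\{0,1\}^{d_T\times d_T}$, so that $\phi_T^\sigma(x) = P_\sigma \phi_T(x)$. Identify $\pi$ with its extended permutation matrix $E_\pi\in\{0,1\}^{d_\text{data}\times d_T}$, which encodes an injection $\pi:[d_T]\to[d_\text{data}]$ (Appendix~\ref{sec:extendedpermutation}). Then
\[
\pi\circ\phi_T^\sigma = E_\pi P_\sigma \phi_T = E_{\pi\circ\sigma^{-1}}\,\phi_T ,
\]
up to the convention of whether $\sigma$ acts on indices or positions; I will fix the convention so that the composed injection is $\pi' \defeq \pi\circ\sigma^{-1}$. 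The first step is to check, entrywise from the definition, that $E_\pi P_\sigma$ is again an extended permutation matrix and that it encodes $\pi'$.

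The second step is a pointwise identity. For every fixed $\pi$ (and fixed label injection $\rho$ and parameters $\theta$), the sequence $Z$ built from $(\Ss,x)$ with projection $\pi$ and encoder $\phi_T^\sigma$ has token $i$ equal to $[\pi(\phi^\sigma_T(x_i)); \mathcal{E}(\rho(y_i))]$. This coincides token by token with the sequence built with projection $\pi'$ and encoder $\phi_T$. The transformer $\trenc$ and the head $\head$ see identical inputs, so
\[
\dci_\theta(\Ss,x;\pi\circ\phi_T^\sigma) = \dci_\theta(\Ss,x;\pi'\circ\phi_T)
\]
holds deterministically, for every realization of $\pi$ and $\rho$.

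The third step is the distributional argument. The map $\pi\mapsto\pi\circ\sigma^{-1}$ is a bijection of $\mathrm{Inj}([d_T],[d_\text{data}])$ onto itself, with inverse $\pi\mapsto\pi\circ\sigma$. Hence if $\pi$ is uniform on this finite set, so is $\pi'$. Because $\rho$ is sampled independently of $\pi$, the pair $(\pi',\rho)$ has the same joint law as $(\pi,\rho)$. Pushing this law forward through the deterministic function $(\pi,\rho)\mapsto \dci_\theta(\Ss,x;\pi\circ\phi_T)$ gives equality in distribution with $\dci_\theta(\Ss,x;\pi\circ\phi_T)$, which is the claim.

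The main obstacle is bookkeeping rather than substance. The statement writes $\sigma(\phi_T(x))$ without specifying whether $\sigma$ moves coordinate $i$ to position $\sigma(i)$ or the reverse. The proof has to commit to one convention and verify that the resulting composite is a genuine injection in $\mathrm{Inj}([d_T],[d_\text{data}])$. It must also check that the same $\pi$ is used for all support tokens and the query token, so that the pointwise identity holds on the whole sequence simultaneously. Finally, I would note that the argument uses only the invariance of the uniform law on injections under right composition with permutations. It needs no property of $\trenc$, which is why the result holds for any parameters $\theta$.
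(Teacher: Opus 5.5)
Your proposal is correct and takes the same route as the paper: substitute $\pi' = \pi\circ\sigma^{-1}$, note that right composition with a permutation of $[d_T]$ is a bijection of $\mathrm{Inj}([d_T],[d_\text{data}])$ and so preserves the uniform law, and conclude equality in distribution. Your version is tighter in three respects: the paper's displayed chain conflates $\pi\circ\sigma$ and $\pi\circ\sigma^{-1}$, whereas you fix the convention; the paper argues only about the projected features of a single $x$, whereas you make the identity hold for all tokens at once; and you also handle the joint law with the independently sampled $\rho$.
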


\begin{proof}
	Let $\pi$ be sampled uniformly from $\mathrm{Inj}([d_T], [d_\text{data}])$.
	Define $\pi' = \pi \circ \sigma^{-1}$.
	Since $\sigma$ is a bijection on $[d_T]$, we have that $\pi'$ is also an injection from $[d_T]$ to $[d_\text{data}]$, and moreover $\pi'$ is uniformly distributed over $\mathrm{Inj}([d_T], [d_\text{data}])$.
	
	Now observe:
	\[
	\pi(\phi_T^\sigma(x)) = \pi(\sigma(\phi_T(x))) = (\pi \circ \sigma)(\phi_T(x)) = \pi'(\phi_T(x)) \circ \sigma \circ \sigma^{-1} = \pi'(\phi_T(x)).
	\]
	Since $\pi$ and $\pi \circ \sigma$ have the same distribution (uniform over injections), the projected features $\pi(\phi_T^\sigma(x))$ and $\pi(\phi_T(x))$ have the same distribution.
	Therefore, $\dci_\theta(\Ss, x; \pi \circ \phi_T^\sigma) \overset{d}{=} \dci_\theta(\Ss, x; \pi \circ \phi_T)$.
\end{proof}

\begin{corollary}[Feature Domain Invariance]\label{cor:feature-domain-invariance}
	Let $T = (\FD_T, \LD_T, p_T, \ell_T)$ and $T' = (\FD_{T'}, \LD_{T'}, p_{T'}, \ell_{T'})$ be two tasks with potentially different feature domains.
	Let $\phi_T: \FD_T \to \R^{d_T}$ and $\phi_{T'}: \FD_{T'} \to \R^{d_{T'}}$ be encoders such that the encoded features preserve task-relevant structure (i.e., samples from the same class have similar encoded representations).
	Then the expected risk of $\dci_\theta$ depends only on the quality of the encoder's representations, not on the specific coordinate structure of the encoder output or the original feature domain $\FD_T$.
\end{corollary}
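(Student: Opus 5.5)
The plan is to make the informal statement precise and then reduce it to Theorem~\ref{thm:feature-equivariance}. I read \enquote{the expected risk of $\dci_\theta$ depends only on the quality of the encoder's representations} as follows. The expected risk
\[
\E_{\pi}\,\E_{\Ss,(x,y)}\bigl[\loss_T(\dci_\theta(\Ss,x;\pi\circ\phi_T),y)\bigr]
\]
is a function only of the pushforward distribution $q_T \defeq (\phi_T\times\mathrm{id})_\# p_T$ on $\R^{d_T}\times\LD_T$, taken modulo coordinate permutations of $\R^{d_T}$. The first step is to observe that $\dci_\theta$ never touches $x\in\FD_T$ except through $\pi(\phi_T(x))$. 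Hence, once $\phi_T$ is fixed, the joint law of the transformer's input tokens is determined by $q_T$, the law of $\pi$, and the law of $\rho$. The original domain $\FD_T$ then drops out by a change of variables: $\E_{(x,y)\sim p_T}[h(\phi_T(x),y)] = \E_{(v,y)\sim q_T}[h(v,y)]$, applied to the product measure over the support and query samples.

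The second step handles the coordinate structure. Suppose two encoders $\phi_T$ and $\phi_{T'}$ induce pushforwards that agree up to a permutation $\sigma$ of coordinates, i.e.\ $q_{T'} = (\sigma\times\mathrm{id})_\# q_T$ (or equivalently $\phi_{T'} = \phi_T^\sigma$ on the relevant data). By Theorem~\ref{thm:feature-equivariance}, the prediction $\dci_\theta(\Ss,x;\pi\circ\phi_T^\sigma)$ has the same distribution over $\pi$ as $\dci_\theta(\Ss,x;\pi\circ\phi_T)$ for every fixed $(\Ss,x)$. Integrating $\loss_T$ against this equality, and then over $\Ss,x,y$ by Fubini (everything is non-negative and measurable), gives equal expected risks. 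Label coding can be handled the same way using Theorem~\ref{thm:equivariance-reindexing}, so that $\LD_T$ versus $\LD_{T'}$ matters only through the number of classes $K$ and the structure of $q_T$.

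To compare tasks of different dimensions $d_T \neq d_{T'}$, I would embed both into the common space. Because $\pi$ is uniform over injections, the image of $\pi\circ\phi_T$ is a uniformly random placement of the $d_T$ coordinates among the $d_\text{data}$ slots. The representation $\pi\circ\phi_T$ is therefore equal in distribution to padding $\phi_T$ with zeros to $d_\text{data}$ coordinates and then applying a uniformly random full permutation. Proposition~\ref{prop:coordinate-coverage} supports the claim that no latent coordinate is privileged during training. Together these show that the risk is a symmetric function of the padded representation distribution, which is exactly the \enquote{quality of the representation} quantity.

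The main obstacle is that the hypothesis \enquote{samples from the same class have similar encoded representations} is not quantitative, so the conclusion cannot be a genuine bound. I would not try to derive a risk bound in terms of class separation. Instead I would state the corollary as the invariance above: the risk is a functional of $q_T$ modulo $\mathrm{Sym}(d_\text{data})$. The similarity assumption would be used only to justify why this functional is the relevant notion of quality.
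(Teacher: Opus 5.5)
Your argument is correct. It rests on the same key ingredient as the paper, Theorem~\ref{thm:feature-equivariance}, but it is a more careful route and proves a sharper, defensible version of the statement.

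The paper's proof combines that theorem with Proposition~\ref{prop:coordinate-coverage}. From these it infers that the trained transformer ``depends only on the relationships between projected features (e.g., distances, angles)'' and hence generalizes to any encoder that preserves task-relevant structure. That step does not follow. Invariance in distribution under coordinate permutations gives no invariance under distance- and angle-preserving maps such as rotations. The coverage proposition is also a statement about training signal, not about the learned function.

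You instead prove only what the symmetry actually gives:
\begin{itemize}
\item The risk is a functional of the pushforward $q_T$, so $\FD_T$ drops out by change of variables. The paper leaves this step implicit.
\item That functional is invariant under coordinate relabelings.
\end{itemize}

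Your observation that a uniform injection equals a fixed zero-padding followed by a uniform permutation of $[d_\text{data}]$ is a genuine improvement, for two reasons. First, it covers $d_T \neq d_{T'}$, which Theorem~\ref{thm:feature-equivariance} does not, since it is stated for permutations of a single $[d_T]$. Second, it shows the invariance holds for every $\theta$, because $\pi$ is re-sampled at inference. Proposition~\ref{prop:coordinate-coverage} is therefore unnecessary in your argument; it would matter only if you wanted the invariance without test-time randomization.

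The trade-off is that your version leaves ``quality'' as an abstract functional of $q_T$ modulo $\mathrm{Sym}(d_\text{data})$. The paper ties quality to concrete relational quantities, but without justification. Given the non-quantitative hypothesis, yours is the right formalization.

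Two small points:
\begin{itemize}
\item Pushforwards agreeing up to $\sigma$ is weaker than, not equivalent to, $\phi_{T'} = \phi_T^\sigma$. This is harmless, because your first step already reduces everything to $q_T$.
\item Comparing different label sets needs a bijection $\LD_T \to \LD_{T'}$ rather than a permutation of one $\LD$; the proof of Theorem~\ref{thm:equivariance-reindexing} adapts verbatim. It also needs a relabeling-invariant loss such as 0--1 loss.
\end{itemize}
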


\begin{proof}
	By Theorem~\ref{thm:feature-equivariance}, the DCI function is equivariant in distribution to any permutation of the encoder's output coordinates.
	By Proposition~\ref{prop:coordinate-coverage}, the transformer is trained symmetrically with respect to all coordinates of the latent space.
	Therefore, the transformer learns a function that depends only on the relationships between projected features (e.g., distances, angles), not on their absolute coordinate positions.
	Since the extended permutation preserves pairwise relationships up to coordinate relabeling, and the transformer treats all coordinates equivalently, the learned algorithm generalizes to any encoder whose output preserves task-relevant structure.
\end{proof}

\section{Additional Results}\label{sec:additional-results}

\subsection{5-Shot Accuracy Degradation}\label{sec:accuracy-degradation}
We additionally report the accuracy degradation with increasing $K$ on 5-shot tasks in Figure~\ref{fig:task-size-ablation-5shot}.
\begin{figure}[h]
	\centering
	\includegraphics{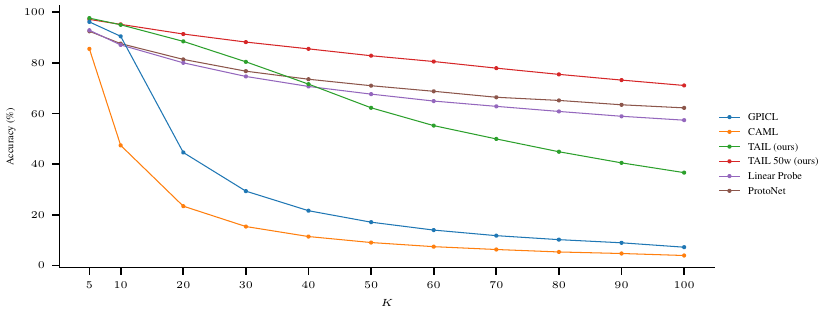}
	\caption{Performance degradation with increasing number of classes (5-shot setting).}
	\label{fig:task-size-ablation-5shot}
\end{figure}

\subsection{Per-Dataset Results with ResNet-18 Encoder}\label{sec:resnet-ablation}

Tables~\ref{tab:resnet-indomain} and \ref{tab:resnet-crossdomain} show the full per-dataset results with a ResNet-18 encoder at test time.

\begin{table}[htbp]
	\centering
	\caption{In-domain results with ResNet-18 encoder (mean accuracy \% over 1000 episodes).}
	\label{tab:resnet-indomain}
	{\tiny\setlength{\tabcolsep}{4pt}
		\begin{tabular}{llUUUU}
			\toprule
			Shot & Method & {CIFAR-FS} & {\emph{mini}ImageNet} & {\emph{tiered}ImageNet} & {Pascal VOC} \\
			\midrule
			\multirow{3}{*}{\textbf{5-shot}}
			& \textbf{Linear Probe} & 64.70 (0.61) & 88.69 (0.37) & 85.62 (0.46) & 66.01 (0.63) \\
			& \textbf{ProtoHead}   & 77.64 (0.55) & 97.68 (0.16) & 95.53 (0.28) & 79.51 (0.57) \\
			& \textbf{\oursp}      & \bfseries 78.93 (0.56) & \bfseries 98.53 (0.12) & \bfseries 96.86 (0.24) & \bfseries 81.30 (0.59) \\
			\midrule
			\multirow{3}{*}{\textbf{1-shot}}
			& \textbf{Linear Probe} & 46.68 (0.65) & 68.35 (0.69) & 66.45 (0.72) & 46.78 (0.65) \\
			& \textbf{ProtoHead}   & 55.10 (0.70) & 85.50 (0.58) & 82.86 (0.63) & 58.86 (0.74) \\
			& \textbf{\oursp}      & \bfseries 63.17 (0.79) & \bfseries 95.08 (0.31) & \bfseries 91.64 (0.45) & \bfseries 67.96 (0.76) \\
			\bottomrule
		\end{tabular}
	}
\end{table}

\begin{table}[htbp]
	\centering
	\caption{Cross-domain results with ResNet-18 encoder (mean 5-way accuracy \% over 1000 episodes).}
	\label{tab:resnet-crossdomain}
	{\tiny\setlength{\tabcolsep}{3pt}
		\begin{tabular}{llUUUUUUUUU}
			\toprule
			Shot & Method & {Aircraft*} & {CUB*} & {meta-iNat*} & {\shortstack{tiered\\meta-iNat*}} & {cxr} & {oct} & {pbc} & {Paintings*} & {\shortstack{Pascal-\\Paintings*}} \\
			\midrule
			\multirow{3}{*}{\textbf{5-shot}}
			& \textbf{Lin. Probe} & 54.58 (0.61) & 76.46 (0.62) & 77.67 (0.58) & 71.10 (0.58) & 23.70 (0.40) & 41.35 (0.54) & 41.62 (0.48) & 47.98 (0.47) & 53.35 (0.47) \\
			& \textbf{ProtoHead}   & 67.30 (0.63) & 89.72 (0.47) & 88.69 (0.45) & 84.46 (0.47) & 25.57 (0.41) & 46.04 (0.52) & 46.93 (0.51) & 57.75 (0.46) & 66.56 (0.44) \\
			& \textbf{\oursp}      & \bfseries 70.11 (0.63) & \bfseries 92.12 (0.40) & \bfseries 91.61 (0.39) & \bfseries 88.73 (0.43) & \bfseries 26.59 (0.44) & \bfseries 47.60 (0.53) & \bfseries 52.47 (0.53) & \bfseries 58.14 (0.46) & \bfseries 67.36 (0.43) \\
			\midrule
			\multirow{3}{*}{\textbf{1-shot}}
			& \textbf{Lin. Probe} & 38.95 (0.60) & 56.88 (0.71) & 58.32 (0.72) & 51.14 (0.67) & 21.72 (0.35) & 32.52 (0.52) & 30.56 (0.46) & 35.40 (0.54) & 37.52 (0.53) \\
			& \textbf{ProtoHead}   & 44.36 (0.65) & 66.89 (0.73) & 70.18 (0.75) & 62.88 (0.73) & 22.58 (0.38) & 33.67 (0.50) & 32.15 (0.47) & 40.54 (0.56) & 45.12 (0.59) \\
			& \textbf{\oursp}      & \bfseries 50.45 (0.68) & \bfseries 81.92 (0.63) & \bfseries 79.70 (0.68) & \bfseries 73.72 (0.75) & \bfseries 23.11 (0.41) & \bfseries 36.16 (0.54) & \bfseries 37.82 (0.54) & \bfseries 45.22 (0.58) & \bfseries 51.76 (0.62) \\
			\bottomrule
		\end{tabular}
	}
\end{table}

With the weaker ResNet-18 encoder, the performance of all methods drops considerably compared to the results obtained with ViT-H.
However, \ours\ consistently and significantly outperforms Linear Probing and ProtoHead across all datasets.
Notably, \ours's advantage over the encoder-only baselines is larger with ResNet-18 than with ViT-H, demonstrating that the meta-learning algorithm contributes more when the encoder is weaker.

\section{Ablation Studies}\label{sec:ablation-studies}
\label{sec:ablations}

In order to better understand our method's reliance of our key architectural components, we conducted the following ablation experiments.

\textbf{Random Feature Projection:}
First we compared \ours, to a version of \ours\ with random projections without the restriction to extended permutations and to a version of \ours\ \emph{without} the universal feature encoding using random projections of the feature space described in Section.~\ref{sec:universal-feature-encoding}.
As can be seen in Table~\ref{tab:ablation-in-projection}, \ours's performance decreased without the random extended permutation, especially in the cross-domain and cross-modality settings.
This indicates that the random projection mechanism is necessary for generalization to unseen domains.

\begin{table}[htbp]
	\centering
	\caption{Average performance on in-domain datasets, cross-domain datasets and cross-modality datasets with and without the random permutation into a common latent space.}
	\label{tab:ablation-in-projection}
	{\tiny%
		\begin{tabular}{lUUUlUUU}
			\toprule
			&           \multicolumn{3}{c}{5-shot}            &  &           \multicolumn{3}{c}{1-shot}            \\
			\cmidrule(){2-4} \cmidrule(){6-8}                   & {in-domain} & {cross-domain} & {cross-modality} &  & {in-domain} & {cross-domain} & {cross-modality} \\ \midrule
			{\ours\ \emph{without} random $\pi$}                & 98.75(0.06) & 83.07(0.11)    & 87.43 (0.62)     &  & 95.8(0.16)  & 63.94(0.17)    & 66.69 (1.62)     \\
			{\ours, random projection $\pi$}                    & 99.09(0.07) & 86.83(0.10)    & 89.91 (0.44)     &  & 96.6(0.13)  & 67.7(0.16)     & 84.96 (0.98)     \\
			{\textbf{\ours, random extended permutation $\pi$}} & 99.21(0.06) & 87.58(0.10)    & 89.62 (0.48)     &  & 97.3(0.12)  & 67.8(0.17)     & 84.87 (1.04)     \\ \bottomrule
		\end{tabular}%
	}
\end{table}

\textbf{Causal vs. Non-Causal Architecture:}
To quantify the impact of removing the causal mask, we replace \ours’s non-causal transformer with an architecture identical in all respects except for a standard causal attention mask. Across all settings, the causal variant exhibits a consistent and substantial drop in accuracy.

\begin{table}[htbp]
	\centering
	\caption{Average performance on in-domain datasets, cross-domain datasets and cross-modality datasets with a causal and with a non-causal transformer architecture.}
	\label{tab:ablation-causal-mask}
	{\tiny%
		\begin{tabular}{lUUUlUUU}
			\toprule
			&           \multicolumn{3}{c}{5-shot}            &  &           \multicolumn{3}{c}{1-shot}            \\
			\cmidrule(){2-4} \cmidrule(){6-8} & {in-domain} & {cross-domain} & {cross-modality} &  & {in-domain} & {cross-domain} & {cross-modality} \\ \midrule
			{\ours\ with causal architecture} & 98.81(0.06) & 70.60 (0.13)   & 70.08 (0.71)     &  & 93.67(0.19) & 53.74(0.17)    & 66.91 (1.42)     \\
			{\textbf{\ours\ (non causal)}}    & 99.21(0.06) & 87.58(0.10)    & 89.62 (0.48)     &  & 97.3(0.12)  & 67.8(0.17)     & 84.87 (1.04)     \\ \bottomrule
		\end{tabular}%
	}
\end{table}

\textbf{Mixed-Modality Training:}
We explored the impact of adding text data to the meta-training set. As Table~\ref{tab:ablation-mixed-modality} shows, the mixed-modality training does not significantly improve performance, except for the text classification task itself. We believe this is an artefact of the composition of the meta-dataset: the comparatively small amount of text data does not significantly increase the data diversity provided by our large image-classification meta-training set. We speculate that exposing TAIL to a larger cross-modal meta-training set with a broader variety of feature domains could strengthen the learned algorithm and add robustness on cross-modality evaluation.
The very minor improvement on the text-classification task supports the claim the \ours\ generalizes well to unseen modalities.

\begin{table}[htbp]
	\centering
	\caption{Average performance on in-domain datasets, cross-domain datasets and cross-modality datasets with our default meta-training set and with a mixed-modality meta-training set including text-classification tasks.}
	\label{tab:ablation-mixed-modality}
	{\tiny%
		\begin{tabular}{lUUUlUUU}
			\toprule
			&           \multicolumn{3}{c}{5-shot}            &  &           \multicolumn{3}{c}{1-shot}            \\
			\cmidrule(){2-4} \cmidrule(){6-8}               & {in-domain} & {cross-domain} & {cross-modality} &  & {in-domain} & {cross-domain} & {cross-modality} \\ \midrule
			{\ours\ trained on mixed-modality training set} & 99.06(0.06) & 85.19(0.09)    & 90.10 (0.59)     &  & 96.75(0.15) & 68.91(0.20)    & 85.91 (1.03)     \\
			{\textbf{\ours\ (trained on images only)}}      & 99.21(0.06) & 87.58(0.10)    & 89.62 (0.48)     &  & 97.3(0.12)  & 67.8(0.17)     & 84.87 (1.04)     \\ \bottomrule
		\end{tabular}%
	}
\end{table}

\textbf{Training Schedule for the Label Embedding Dictionary:}
Lastly, we investigated the effect the embedding dictionary schedule (see Section~\ref{sec:random-injection-label-embedding}) on the speed of convergence.
Figure~\ref{fig:ablation-numlab-schedule} shows that slowly adding more embeddings to the embedding dictionary during training accelerates convergence.
This is likely due to the fact that training can be jump-started with easier problems, an effect that is also known from curriculum learning~\citep{bengioCurriculumLearning2009}.

\begin{figure}[htb]
	\centering
	\includegraphics{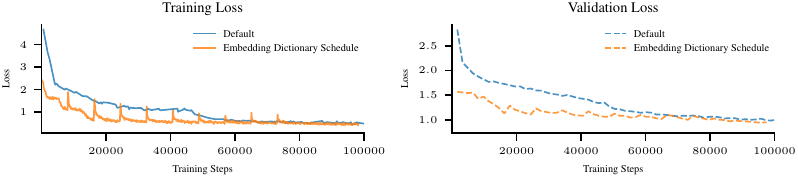}
	\caption{Validation loss curves for scheduled addition of more embeddings to the embedding dictionary.}
	\label{fig:ablation-numlab-schedule}
\end{figure}

\section{Architecture Details}\label{sec:architecture-details}
\subsection{Pretrained Feature Encoders}

We use the following pretrained encoders for the different modalities:

\textbf{Vision Tasks:}
We use the ViT-H model trained on LAION-2B~\citep{schuhmannLAION5BOpenLargescale2022} provided by OpenCLIP \citep{ilharcoOpenCLIP2021}, an open source reimplementation of OpenAI’s CLIP. Images were resized to $224\times 224$ and we applied standard ImageNet normalization.

\textbf{Text Tasks:}
We use a pretrained uncased version of DistilBERT \citep{sanhDistilBERTDistilledVersion2020} to embed text tasks.

\textbf{Audio Tasks:}
We use a pretrained wav2vec 2.0 model \citep{baevskiWav2vec20Framework2020} to embed audio tasks.

\subsection{Transformer Architecture}
We provide the parameters of the transformer architecture of \ours\ in Table~\ref{tab:architecture}. 
\begin{table}[h!]
	\centering
	\caption{TAIL architecture hyperparameters}
	\label{tab:architecture}
	\begin{tabular}{ll}
		\toprule
		Component & Configuration \\
		\midrule
		\textbf{Transformer Encoder} & \\
		\quad Hidden dimension & 1536 \\
		\quad Number of layers & 16 \\
		\quad Attention heads & 16 \\
		\quad MLP dimension & 3072 \\
		\quad Dropout & 0.0 \\
		\quad Activation & GELU \\
		\quad Normalization & Layer Norm \\
		\midrule
		\textbf{Feature Projection} & \\
		\quad Output dimension ($d_\text{data}$) & 1280 \\
		\quad Type & Extended permutation matrix \\
		\midrule
		\textbf{Label Embedding} & \\
		\quad Embedding dimension ($d_\text{label}$) & 256 \\
		\quad Dictionary size ($M$) & 100 (default), 256 (large) \\
		\bottomrule
	\end{tabular}
\end{table}

\subsection{Modifications to GPICL}\label{sec:gpicl-mod}
We extend the positional encoding to have more vectors than required and use the first $K\cdot N$ vectors to encode the positions of a sequence. With this approach, GPICL accepts sequences of variable lengths, allowing us to test it in the cross-modality and label extrapolation settings.

\section{Training Details}\label{sec:training-details}
\subsection{Episode Sampling}
For each episode, a dataset is sampled at random from the meta-training set, weighted by the number of classes in the dataset.
A task is generated by choosing $K$ classes at random from the dataset.
Support and Query sets are then generated by first sampling a \enquote{number of shots} $N$ for the episode and then sampling support and query sets as described in Section~\ref{sec:few-shot-instance}.

\subsection{Optimization}
We use the Adam optimizer with a circular learning rate schedule and a maximum learning rate of $3\cdot 10^{-5}$.


\end{document}